\documentclass[letterpaper, journal, twoside]{ieeeconf}
\usepackage{times}

\usepackage{multicol}

\usepackage{comment}
\usepackage{soul}


\usepackage[bookmarks=true]{hyperref}

\usepackage{graphics} 
\usepackage{epsfig} 
\usepackage{times}

\usepackage{amsmath} 
\usepackage{amssymb} 
\usepackage{bm}  
\usepackage{siunitx} 
\usepackage{textcomp}
\usepackage{gensymb}
\usepackage{booktabs}
\usepackage{multirow}
\usepackage{xcolor}
\usepackage{cleveref}

\usepackage{enumitem}
\usepackage{xspace}
\usepackage{amsfonts}
\usepackage{mathrsfs}
\usepackage{mathtools}
\usepackage[ruled,vlined,linesnumbered]{algorithm2e}
\usepackage{makecell}
\usepackage{multicol}
\usepackage{rotating}
\usepackage{tikz}
\usepackage{courier}

\usepackage{caption}
\captionsetup{compatibility=false}
\usepackage{subcaption}

\usepackage{graphicx} 
\usepackage{pgfplots} 
\pgfplotsset{compat=1.18} 

\definecolor{CranJ}{cmyk}{0,0.69,0.54,0.04} 
\definecolor{PinkJ}{cmyk}{0,0.71,0.43,0.12} 
\definecolor{Cran}{cmyk}{0,0.73,0.41,0.29} 
\definecolor{VRed}{cmyk}{0,0.75,0.25,0.2} 
\definecolor{ORed}{cmyk}{0,0.75,0.75,0} 
\definecolor{CBlue}{cmyk}{1,0.25,0,0} 
\usepackage{bm}







\parindent 8pt
\parskip 1.25ex

\newcommand{\vect}[1]{\boldsymbol{\mathbf{#1}}}

 \newcommand{\boxend}{\hfill \ensuremath{\Box}}
\newcommand{\oprocendsymbol}{\hbox{$\bullet$}}
\newcommand{\oprocend}{\relax\ifmmode\else\unskip\hfill\fi\oprocendsymbol}

\allowdisplaybreaks

\newtheorem{thm}{Theorem}[section]

 \newtheorem{lem}{Lemma}
\newtheorem{assump}{Assumption}

\definecolor{lightred}{rgb}{1.0, 0.7, 0.7}
\definecolor{darkred}{rgb}{0.5, 0.0, 0.0}
\definecolor{lightblue}{rgb}{0.7, 0.7, 1.0}
\definecolor{darkblue}{rgb}{0.0, 0.0, 0.7}

\NewDocumentCommand{\todo}{o m}{\textcolor{red}{\textbf{TODO\IfNoValueTF{#1}{}{(#1)}:} #2}}
\NewDocumentCommand{\note}{o m}{\textcolor{orange}{\textbf{NOTE\IfNoValueTF{#1}{}{(#1)}:} #2}}

\pdfinfo{
   /Author (Homer Simpson)
   /Title  (Robots: Our new overlords)
   /CreationDate (D:20101201120000)
   /Subject (Robots)
   /Keywords (Robots;Overlords)
}

\newcommand{\longthmtitle}[1]{\mbox{}\textit{{(#1):}}}
\begin{document}

\title{BEASST: Behavioral Entropic Gradient based Adaptive \\ Source Seeking for Mobile Robots
}



\author{Donipolo Ghimire${^1}$~~ Aamodh Suresh${^2}$~~ Carlos Nieto-Granda${^2}$ ~~ Solmaz S. Kia${^1}$, \emph{Senior Member, IEEE}
\thanks{Manuscript received: August 10, 2025; Revised November 2, 2025; Accepted November 28, 2025. This paper was recommended for publication by Editor Olivier Stasse upon evaluation of the Associate Editor and Reviewers' comments.}
\thanks{${^1}$D.~Ghimire and S.~Kia are with the Department of Mechanical and Aerospace Engineering, University of California, Irvine, CA 92697 {\tt\footnotesize dghimire,solmaz@uci.edu}} 
\thanks{${^2}$ A.~Suresh and C.~Nieto-Granda are with the U.S. DEVCOM Army Research Laboratory (ARL), Adelphi, MD 20783 {\tt\footnotesize,aamodh@gmail.com, carlos.p.nieto2.civ@army.mil}}
}

\maketitle

\begin{abstract}
This paper presents BEASST (Behavioral Entropic Gradient-based Adaptive Source Seeking for Mobile Robots), a novel framework for robotic source seeking in complex, unknown environments. Our approach enables mobile robots to efficiently balance exploration and exploitation by modeling normalized signal strength as a surrogate probability of source location. Building on Behavioral Entropy (BE) with Prelec's probability weighting function, we define an objective function that adapts robot behavior from risk-averse to risk-seeking based on signal reliability and mission urgency. The framework provides theoretical convergence guarantees under unimodal signal assumptions and practical stability under bounded disturbances. Experimental validation across DARPA SubT and multi-room scenarios demonstrates that BEASST consistently outperforms state-of-the-art methods and exhibits strong robustness to noisy gradient estimates while maintaining convergence. BEASST achieved 15\% reduction in path length and 20\% faster source localization through intelligent uncertainty-driven navigation that dynamically transitions between aggressive pursuit and cautious exploration. 
\end{abstract}

\section{Introduction}
\vspace{-0.05in}
This paper addresses active source seeking (SS) in unknown environments via a mobile robot. SS aims to move the robot to a target location based on perceived information from a signal of interest (SoI), such as radio signals, radiation sources, or plumes from hazardous agents~\cite{JF-VK:10, FM-TW-CP-KA:18, AF-SJ:22, GC-KA-PT-AL:17}. These signals are often affected by environmental obstacles and distance, making target localization challenging across applications including environmental monitoring, search and rescue operations, and hazardous material detection. Traditionally, SS methods often rely on extremum seeking controllers~\cite{SA-GP:12}, flow field tracking~\cite{SL-YG-BB:14}, or employ simplified probabilistic models that assume static uncertainty where every measurement is treated equally trustworthy~\cite{JF-VK:10,CD-OP-JO-AA:23}.
Extremum-seeking approaches, while providing real-time capabilities, suffer from vanishing gradients in far-field regions and overshooting behavior near sources, limiting their effectiveness in complex signal landscapes. Alternative approaches include bio-inspired methods such as infotaxis~\cite{vergassola2007infotaxis} for odor source localization, which exploit intermittent signal detection patterns but often lack theoretical convergence guarantees. Multi-robot cooperative strategies~\cite{SL-RK-YG:14,spears2006distributed, clark2006cooperative} have demonstrated promise in distributed scenarios but introduce coordination complexity and communication overhead that can limit scalability. Similarly, discrete planning approaches evaluate information gain at the boundary of known and unknown areas (frontiers), greedily selecting them until sources are found~\cite{VJ-DG:12,CN-JR-CH:21}. While entropy-based frameworks like Infosploration~\cite{CN-JR-CH:21} use Shannon entropy (SE) to model environmental uncertainty for strategic trajectory planning, they often struggle with global optimality and efficient information gathering in complex environments with varying signal characteristics. The fundamental limitation lies in their uniform treatment of uncertainty across different signal strengths, failing to capture the nuanced decision-making required for effective source seeking. A critical limitation of these conventional methods is their inability to dynamically adapt motion strategies based on perceived signal strength, leading to vanishing gradients or overshooting. Furthermore, diverse SS applications demand flexible policies that can respond to varying environmental conditions and signal characteristics, often requiring extensive manual parameter tuning to balance exploration and exploitation effectively.

\begin{figure}[t]
    \centering
    \includegraphics[width=0.45\textwidth]{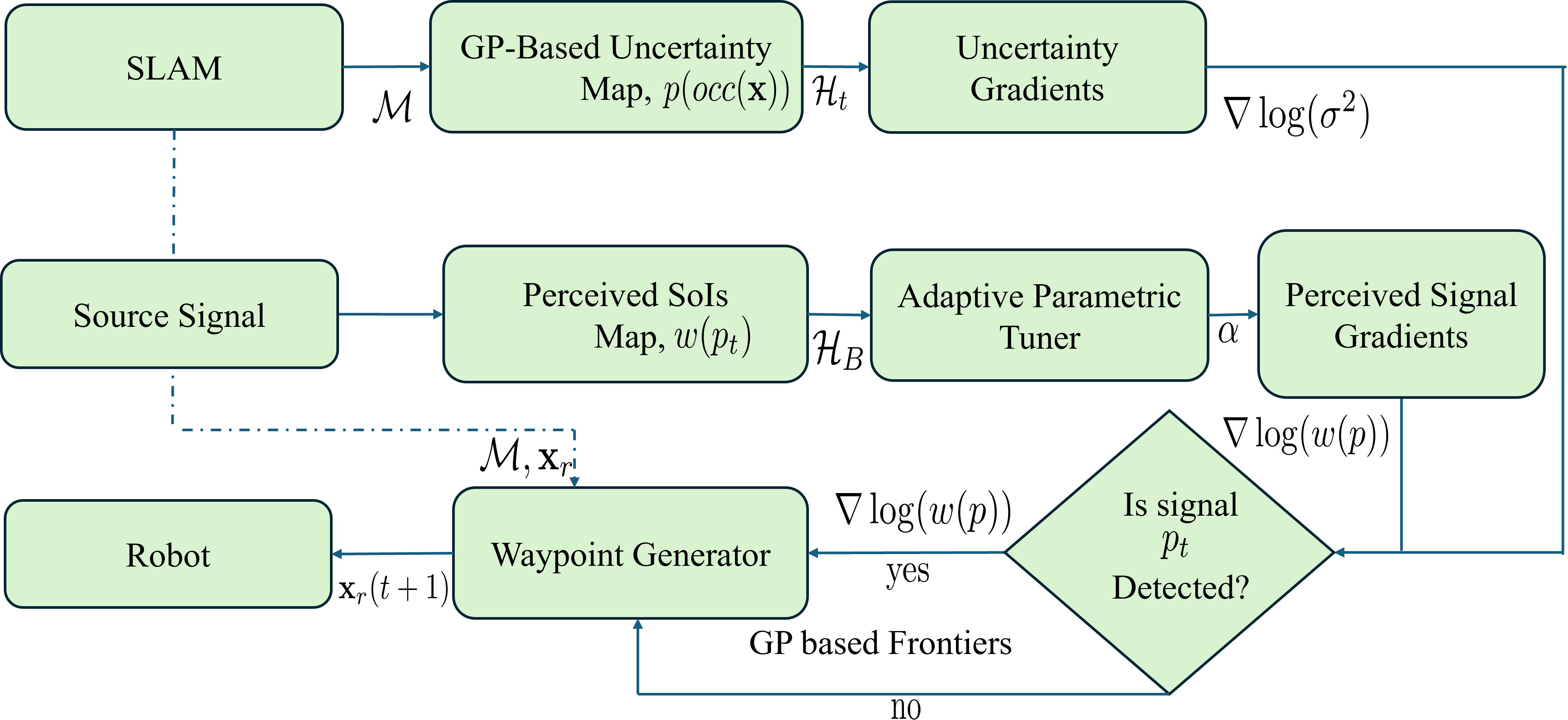} 
    \caption{\small{ A schematic overview of the proposed system for Source-Seeking. The SLAM module provides the occupancy grid map $\mathcal{M}$, and the robot's localization $\vect{x}_r$. $p(occ(\mathbf{x}))$ is the occupancy probability computed using Gaussian Process (GP). The differential entropy of the map at $\vect{x}_r$, $\mathcal{H}_t$ is evaluated. The entropy is simplified and its gradients are calculated (See~\ref{sec::BEASST_framework}). When a signal is detected, the perceived SoI, $w(p_{t})$, is computed using a source modeled as a normalized signal strength field and a weighting function. We then compute the Behavioral Entropy $\mathcal{H}_B$ for source seeking and use an Adaptive Parametric Tuner to select the appropriate $\alpha$. If the signal is not detected, we explore the map using GP-based frontiers. The waypoint generator produces the waypoints used by the controller to guide the robot toward the source (See~\ref{sec::BEASST_framework}).
    }}
    \label{fig::RAL_BEASST_prob_formulation}
    \vspace{-0.2cm}
\end{figure}

To overcome these limitations, Behavioral Entropy (BE)~\cite{AS-CN-SM:24} offers avenues for adaptable uncertainty perception through its foundation in prospect theory and cognitive decision-making models. BE stands out for its intuitive encoding of human-like cognitive biases in uncertainty perception, offering richer means to influence robot behavior~\cite{AS-SM:21,AS-CN-SM:24} through probability weighting functions~\cite{DP:98}. This framework can model both risk-averse and risk-seeking behaviors. Its scalar parameters ($\alpha$ and $\beta$) provide a powerful tuning mechanism, enabling the robot to dynamically tailor its strategy from risk-averse to risk-seeking based on varying signal strengths and mission requirements.~\cite{AS-CN-SM:24} and~\cite{WAS-AS-CN:25} used BE in exploration objectives and showed improved performance in exploration and data generation, motivating BE's extension to SS by providing naturally adaptive behaviors. 

To address these fundamental limitations, we propose BEASST, a novel continuous control framework that seamlessly integrates exploration and exploitation phases. While leveraging GP-based exploration~\cite{MA-HJ-NR-LL:23} for environmental understanding, this work primarily advances the SS methodology through innovative behavioral modeling. BEASST employs a data-driven methodology that models normalized signal strength as a surrogate probability for source location, enabling direct application of probabilistic methods without requiring explicit signal propagation models. Building on BE, this framework defines an objective function over this surrogate probability, generating dynamic source-seeking velocity field via a 2-Wasserstein gradient flow that adapts to local signal characteristics. These fields enable autonomous adaptation of search behavior, transitioning smoothly between aggressive exploration in weak-signal regions and cautious exploitation near potential sources. This adaptive capability is crucial for enhancing SS performance in weak signal scenarios where traditional methods often fail. The overall framework is depicted in Fig.~\ref{fig::RAL_BEASST_prob_formulation}  and our primary contributions are:
\begin{itemize}
\item \textit{Data-Driven Surrogate Probability for SoI:} Modeling normalized signal strength as a surrogate spatial probability, which enables computationally efficient and data-driven SS without requiring signal propagation models.

\item \textit{Behavioral Entropic Gradients (BUGs):} A novel SS control framework via localized differential BE, formulated from 2-Wasserstein gradient flow to yield adaptive source-seeking velocity fields with theoretical convergence and stability guarantees under bounded Gaussian noise.

\item \textit{BEASST Framework:} Real-time adaptive behavior tuning that autonomously refines control policies, enabling smooth transitions between aggressive exploration and cautious exploitation for expedited convergence, particularly effective in weak-signal scenarios.

\end{itemize}
We validate our framework through several ROS-Unity simulation environments with different configurations and several comparisons with frontier-based SS methods like Infosploration~\cite{CN-JR-CH:21}, Behaviorsploration~\footnote{Is an extension of Infosploration by incorporating BE to evaluate information gain of frontiers during the exploration phase.} and random frontiers. Moreover, we show BEASST exhibits strong robustness to noisy gradients in these environments, and maintains convergence and performance under bounded Gaussian disturbances consistent with our theoretical stability guarantees.


\section{Problem Setting and Preliminaries}
\label{sec::RAL_BEASST_problem_statement}
\vspace{-0.05in}
This paper addresses source seeking in a bounded, unknown $2$D environment $\mathcal{E} \subset \mathbb{R}^2$ with a single mobile robot. Equipped with an exteroceptive sensor (e.g., Lidar), the robot uses SLAM~\cite{ST:05} for localization ($\mathbf{x}_{r}(t) \in SE(2)$) and mapping ($\mathcal{M}_D$) in $\mathcal{E}$. Its primary objective is to locate $N_{s}\in\mathbb{Z}_{\geq1}$ accessible sources within $\mathcal{E}$ (i.e., reachable via traversable regions) by utilizing their emitted signals of interest (SoIs). At each point $\vect{x}\in\mathcal{E}$, the robot measures SoI strength, denoted as $P_{dBm}(\vect{x})$.

\begin{assump}\longthmtitle{Distinguishable and Non-Interfering Signals}
Signals from distinct sources are distinguishable (e.g., unique identifiers like MAC addresses in wireless communication) and non-interfering, allowing for independent and accurate measurement of each source's strength.\boxend
\end{assump}

To effectively discover and detect SoI, the robot employs a frontier-based exploration strategy. We specifically utilize GP-based frontier exploration~\cite{MA-HJ-NR-LL:23} due to its enhanced performance, stemming from its ability to provide a predictive model of unknown areas. This algorithm is briefly reviewed in Section~\ref{sec::BEASST_framework}, detailing its integration into our SS framework. Upon SoI detection during exploration, the robot transitions to a dedicated SS mode. Subsequent sections detail our novel SS methodology and the adaptive switching mechanism within the BEASST framework. To develop our SS we rely on innovative use of BE to develop a merit function whose minimization guides the robots towards the detected signal source. BE~\cite{AS-CN-SM:24} quantifies the subjective perception of uncertainty, drawing inspiration from human decision-making models in Behavioral Economics~\cite{DP:98, SD:16}. Its differential form, for a pdf $p(\mathbf{x})$, is given by:
\begin{equation}
\mathcal{H}_B(p) = -\int w(p(\vect{x})) \log(w(p(\vect{x}))) \, \text{d}\vect{x}.
\label{eqn:be_differential}
\end{equation}
Central to BE is Prelec's probability weighting function~\cite{DP:98}, $w : [0, 1] \rightarrow [0, 1]$, defined as:
$w(p) = \textup{e}^{-\beta(-\log (p))^\alpha},$
where $p$ is an `objective' uncertainty measure.\footnote{For the differential form of BE, where $p(\vect{x})$ is a probability density function, special consideration is often required as $p(\vect{x})$ can take values outside $[0,1]$~\cite{WAS-AS-CN:25}. In our application, however, $p(\vect{x})$ represents a normalized surrogate probability derived from signal strength, which by definition lies within the $[0,1]$ range, ensuring compatibility with $w(p)$.} When $\alpha =1$ and $\beta=1$, $\mathcal{H}_B$ equates to Shannon entropy. The parameter $\alpha > 0$ controls the convexity or concavity of $w$, significantly altering perceived probability: a small $\alpha \approx 0$ makes $w$ concave and over-weights probabilities ($w(p) \gg p$), while $\alpha \to \infty$ under-weights them ($w(p) \ll p$). The parameter $\beta > 0$ scales the overall weighting~\cite{AS-CN-SM:24}. Thus, Prelec's function $w$ encodes the robot's `subjective perception' of uncertainty, making $w(p)$ the \emph{perceived} uncertainty measure. BE's parameters ($\alpha, \beta$) can dynamically change the robot's behavior by altering its perceived notion of uncertainty. For instance, a small $\alpha$ can magnify perceived uncertainty in regions of low objective probability, making the robot more uncertainty-averse in those areas.

\section{Behavioral Source Seeking}
\label{sec:source_seeking}
\vspace{-0.05in}
In SS, generally, moving in the direction of $\nabla P_{dBm}(\vect{x})$ (the gradient of the raw signal strength) is a valid, simple approach for reaching the source. However, this simple gradient ascent might suffer from issues like vanishing gradients far from the source (slow progress) or overshooting near the source (oscillations). It also doesn't allow for sophisticated control over the perception of the signal. In what follows, we develop a behavioral SS approach, which using the concept of BE allows the robot to adjust its perceived signal strength and control its gradient size and thereby avoiding the limitations associated with purely gradient-following methods.

We define the normalized measured signal strength as:
\begin{equation}
\label{eq::normal_Pdm}
p(\vect{x})=\phi({P_{dBm}(\vect{x})}),  \quad \phi:\mathbb{R} \rightarrow (0,1]
\end{equation} where $P_{dBm}(\vect{x})$ is the detected signal strength at $\vect{x}$, where $\phi(\cdot)$ is a continuously differentiable and strictly increasing normalizing function. This general formulation encompasses various normalizations, where one could choose $(\phi(s) = \frac{s}{Z})$, $(\phi(s)=s / P_{dBm}^{\max})$ or any increasing function. In our SS, this normalized signal strength function serves as a \emph{surrogate probability density function}. Its value is almost $1$ at the source location and decreases with distance, reflecting the diminishing signal~strength. Our rigorous theoretical development presented below relies on the following assumptions regarding the characteristics of the signal strength and signal reception. It is worth noting that empirical studies carried out in Section~\ref{sec::Simulation and Results} indicate the method's robust performance in complex real-world scenarios where obstacles and noise may lead to localized deviations from ideal continuity conditions.

\begin{assump}\longthmtitle{Single-Source Signal Profile (Unimodal and Differentiable)}\label{assump::Unimodal}
For each individual signal source, the raw signal strength, $P_{\text{dBm}}(\vect{x})$, is a \emph{continuous and differentiable function} of the robot's position $\vect{x}$. Furthermore, $P_{\text{dBm}}(\vect{x})$ is \emph{unimodal} and possesses a \emph{single global maximum}, which unambiguously defines the unique source location $\vect{x}_s$ of its emitting source. As the distance from $\vect{x}_s$ increases, $P_{\text{dBm}}(\vect{x})$ continuously \emph{diminishes} ($P_{\text{dBm}}(\vect{x}) \to 0$) as $\|\vect{x}\| \to \infty$. This implies that the normalized signal strength, defined in~\eqref{eq::normal_Pdm}, is unimodal with a unique global maximum of $1$ at $\vect{x}_s$, and $p(\vect{x}) \to 0$ as $\|\vect{x}\| \to \infty$.\boxend
\end{assump}

\section{Behavioral motion policy design for source~seeking}
\vspace{-0.05in}
\begin{figure}[t]
    \centering
    \includegraphics[width=0.7\linewidth]{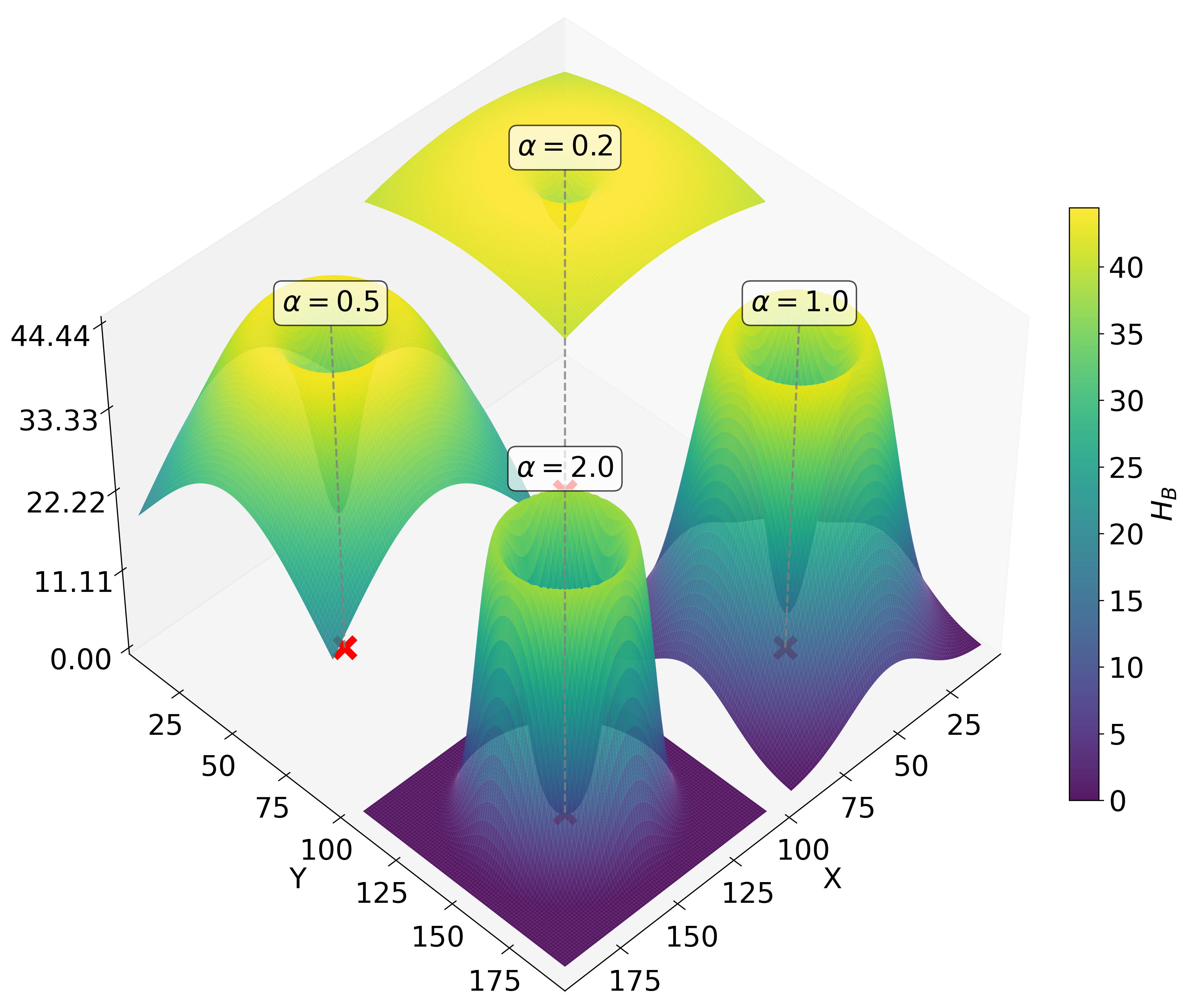}
    \caption{\small Localized differential BE~\eqref{eq::behavioral_entropy} computed over a grid in local patches for an exponentially decaying signal‐strength model (See ~\eqref{eq::SOI_model}) for $\beta=1$ and $\alpha\in \{ 0.2, 0.5, 1.5 ,2.0\}$ with $4$ sources located at \{(50,50), (150,50), (50,150), (150,150)\}.  }
    \label{fig:RAL_BEASST_local_HB}
\end{figure}

To guide the robot towards the source using the normalized measured signal strength $p(\vect{x})$ defined in~\eqref{eq::normal_Pdm}, we propose to continuously minimize its local \emph{differential BE}, $\mathcal{H}_B$, defined using prelec's weighting function (fixing $\beta = 1$) with one time-varying free parameter $\alpha_{t}$:
\begin{align}
\label{eq::behavioral_entropy}
\mathcal{H}_B(t,\vect{x}_r(t))=-\int_{\mathcal{R}_{\vect{x}_{r}(t)}} \!\!\!w(p(\vect{x}))
\log\big(w(p(\vect{x}))\big) \, \mathrm{d}\vect{x},
\end{align}
where $\mathcal{R}_{\vect{x}_{r}(t)}$ is a closed and compact area around robot location $\vect{x}_{r}(t)$. For notational brevity, we denote $w(p(\vect{x}), \alpha_t)$ as $w(p(\vect{x})$. However, this minimization is not carried out in the conventional Euclidean sense of finding the lowest point on a scalar landscape, as directly differentiating $\mathcal{H}_B$ with dynamically changing integration region $\mathcal{R}_{\mathbf{x}_r(t)}$ is challenging and may lead to undesirable local minima in regions of low signal strength. Instead, we seek a procedure similar to the steepest descent of $\mathcal{H}_B$ within the 2-Wasserstein space of probability distributions by considering variations in the underlying perceived signal density $w(p(\mathbf{x}))$. This approach uniquely defines a velocity field that guides the robot by using perceived signal density towards regions of higher concentration and magnitude, thereby ensuring convergence to the source. Fig.~\ref{fig:RAL_BEASST_local_HB} illustrates BE computed over a grid in local patches and clearly shows how different values of $\alpha$ reshape the entropy landscape, thereby influencing the robot's trajectory towards the source. Framing $\mathcal{H}_B$ in this form also opens doors to analytical manipulation using functional derivatives, as shown later in this section.

\begin{figure}[t]
    \centering 
\begin{subfigure}[t]{0.3\columnwidth}
   \centering 
\includegraphics[width=0.9\linewidth]{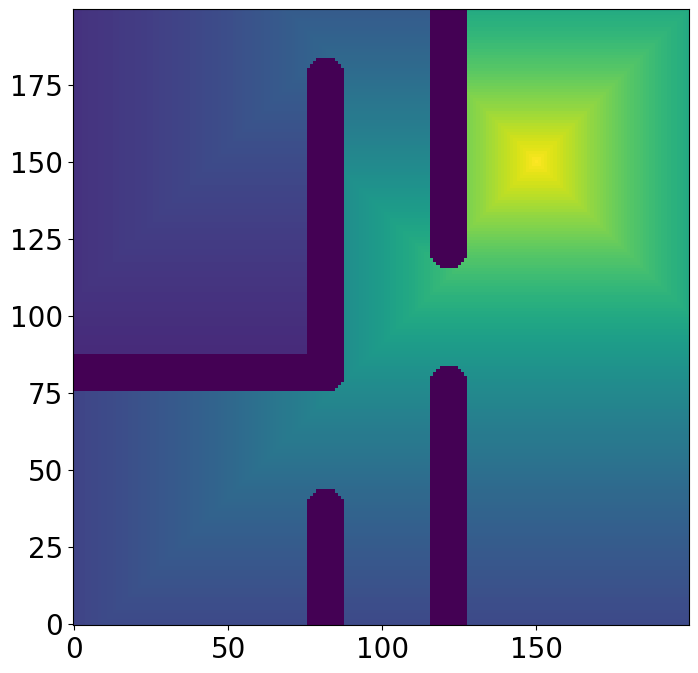}
  \caption{{\small $w(p)$, $\alpha=1.0$}}
  \label{subfig2:alpha1}
\end{subfigure}\hfil ~
\begin{subfigure}[t]{0.3\columnwidth}
   \centering 
  \includegraphics[width=0.9\linewidth]{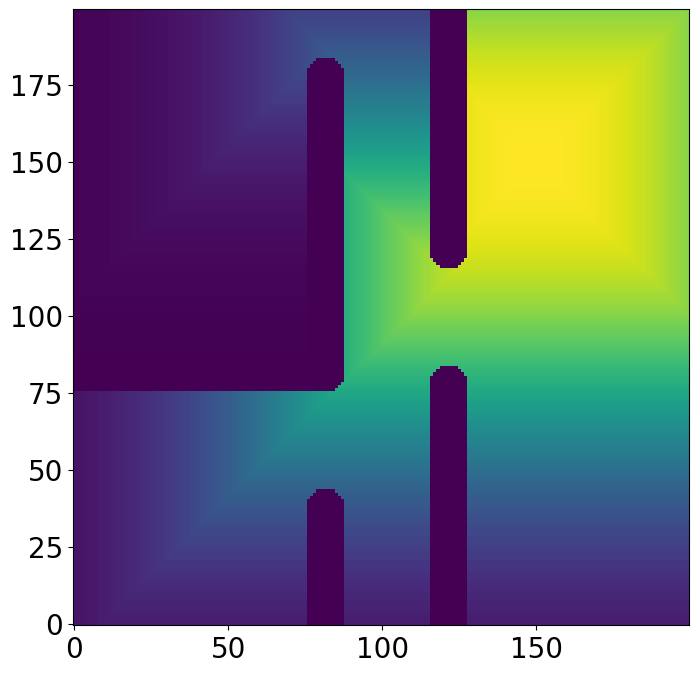}
  \caption{{\small $w(p)$, $\alpha=2.0$}}
  \label{subfig2:alpha2}
\end{subfigure}\hfil ~
\begin{subfigure}[t]{0.3\columnwidth}
  \centering 
  \includegraphics[width=0.9\linewidth]{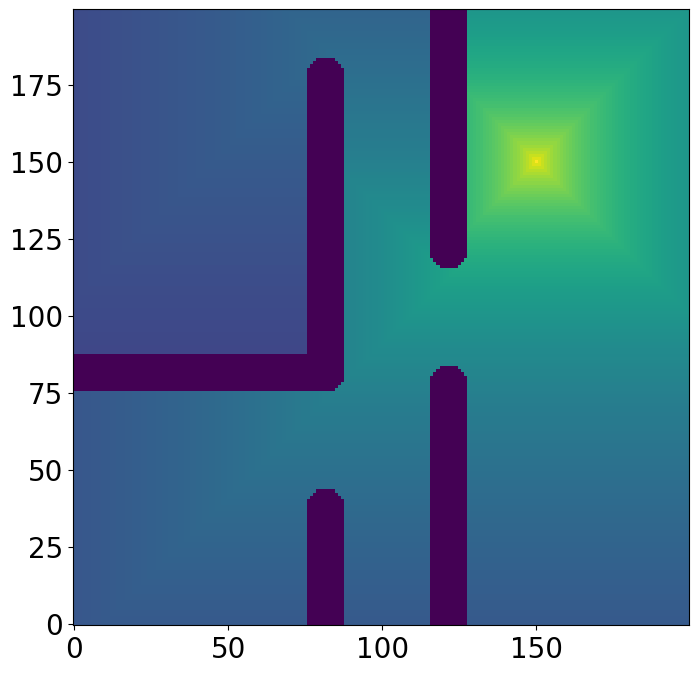}
  \caption{$w(p)$, $\alpha=0.8$}
  \label{subfig2:alpha0.8}
\end{subfigure} 
\medskip
\begin{subfigure}[t]{0.3\columnwidth}
   \centering 
  \includegraphics[width=\linewidth]{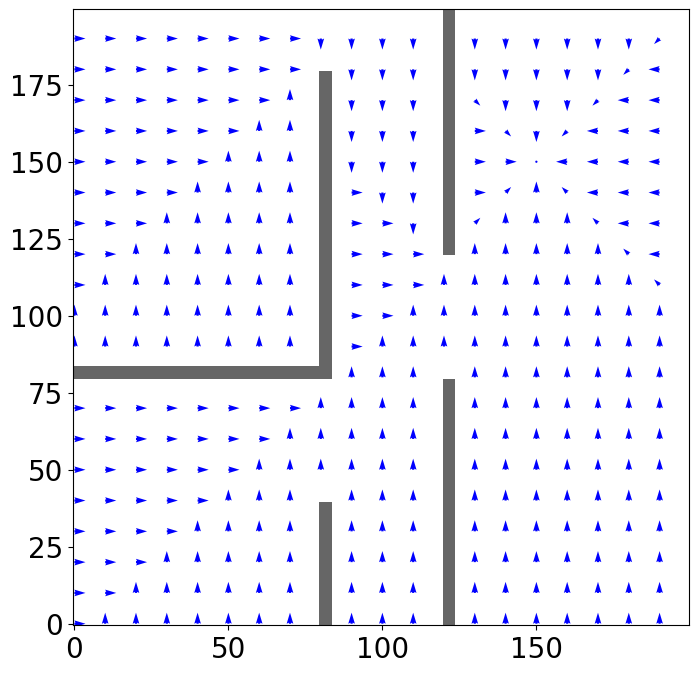}
  \vspace{-0.14in}
  \caption{$\nabla \log( w(p))$ for $\alpha=1.0$.}
  \label{subfig2:gradient1}
\end{subfigure}\hfil ~
\begin{subfigure}[t]{0.3\columnwidth}
   \centering 
  \includegraphics[width=\linewidth]{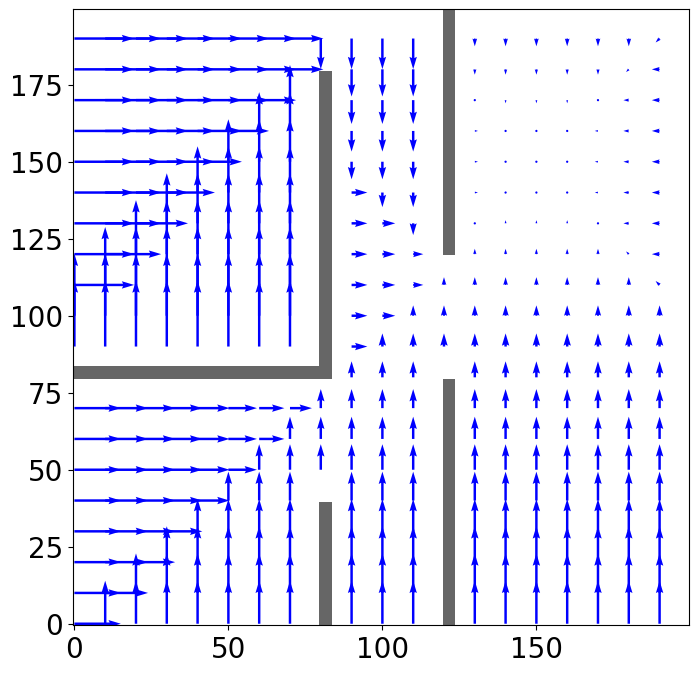}
  \caption{$\nabla \log( w(p))$ for $\alpha=2.0$.}
  \label{subfig2:gradient2}
\end{subfigure}\hfil ~
\begin{subfigure}[t]{0.31\columnwidth} 
   \centering 
  \includegraphics[width=\linewidth]{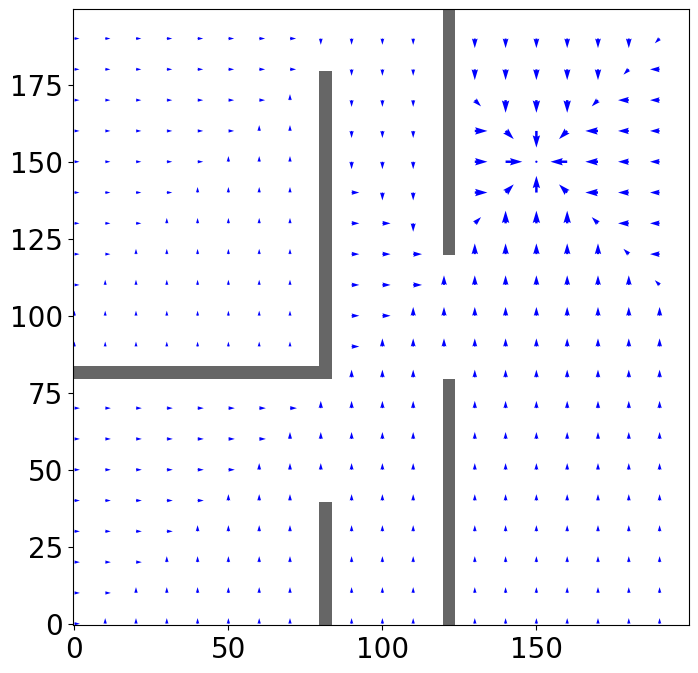}
  \caption{$\nabla \log( w(p))$ for $\alpha=0.8$.}
  \label{subfig2:gradient0.8}
\end{subfigure}\hfil 
\caption{{\small The Perceived Signal $w(p)$ maps and their corresponding gradient fields for $\beta=1$ and $\alpha\in \{ 1.0,2.0,0.8\}$ respectively. In case corresponding to $\alpha=1$ the perceived signal is the original signal but for $\alpha\in\{0.8,2\}$ the Prelec's weighing function changes the perceived signal strength and its gradients.} \vspace{-25pt}
}.
\label{fig:RAL_BEASST_SOIs_representation}
\end{figure}

To obtain the steepest descent of $\mathcal{H}_B$ within the 2-Wasserstein space, we introduce a variation in $w(\vect{x})$:
\begin{equation}
    w(\mathbf{x}) + \delta w(\mathbf{x}) = w(p(\mathbf{x})) + \epsilon \,\eta(\mathbf{x}),
\end{equation}
where $\epsilon$ is a small parameter, $\eta(\mathbf{x})$ represents the direction in which $w$ is being perturbed. The variation $\delta w(\mathbf{x}) = \epsilon \,\eta(\mathbf{x})$, acts as a test direction for computing the change in $w$ as robot moves. We then express the behavior entropy $\mathcal{H}_B$ in terms of the perturbed weighting function:
\begin{equation*}
    \mathcal{H}_B(w + \delta w) = -\int_{\mathcal{R}_{\mathbf{x}_r(t)}} \!\!\!\!\!(w(\mathbf{x}) + \delta w(\mathbf{x})) \log(w(\mathbf{x}) + \delta w(\mathbf{x})) \, \text{d}\mathbf{x}.
\end{equation*}
The functional derivative of $\mathcal{H}_B$ with respect to $w$, known as the Gâteaux derivative, can be computed as:
\begin{equation}
   \delta \mathcal{H}_B(w,\delta w) = \lim_{\epsilon \to 0} \frac{\mathcal{H}_B(w + \delta w) - \mathcal{H}_B(w)}{\epsilon}.
\end{equation}
Equivalently, this can be expressed as:
\begin{align}
    \delta \mathcal{H}_B(w,\delta w) &= \frac{\text{d}}{\text{d}\epsilon} \mathcal{H}_B(w(\mathbf{x}) + \delta w(\mathbf{x}))\bigg|_{\epsilon=0} \nonumber \\
    &= -\int_{\mathcal{R}_{\mathbf{x}_r(t)}} \left(\log(w(\mathbf{x})) + 1\right) \eta(\mathbf{x}) \, \text{d}\mathbf{x}.
\end{align}
From this, the first variation of $\mathcal{H}_B$ with respect to $w$ is:
\begin{equation}
    \frac{\delta \mathcal{H}_B}{\delta w} = -\log(w(\vect{x})) - 1.
\end{equation}
As shown in ~\cite{Villani:2009}, the steepest descent direction of $\mathcal{H}_B$ occurs along the 2-Wasserstein gradient flow, defined by $\vect{v(\vect{x})} = -\nabla \frac{\delta \mathcal{H}_B}{\delta w} = \nabla \log(w(\vect{x}))$.

This continuous flow field directly dictates the robot's motion at position $\vect{x}_r(t)$:
\begin{align}
    \dot{\vect{x}}_r(t) = \vect{v}(\vect{x}_r(t)) = \nabla \log(w(p(\vect{x}_r(t)))), \label{eq:robot_control_law}
\end{align}
where we emphasize the implicit dependence of $w$ on $p(\vect{x}_r(t))$ and fixed parameters $\alpha, \beta$. This control law drives the robot in the direction of the steepest decrease in $\mathcal{H}_B$, thereby reducing BE and guiding the robot toward the stronger signal region (i.e., the source). Crucially, the spatial gradient $\nabla \log(w(\vect{x}_r(t)))$ can be expressed as:
\begin{equation}\label{eq::gradient_flow}
    \nabla \log(w(\vect{x}_r(t))) = \frac{\beta \alpha (-\log p(\vect{x}_r(t)))^{\alpha-1}}{p(\vect{x}_r(t))} \nabla p(\vect{x}_r(t))\footnote{The spatial gradient $\nabla p(\vect{x})$ can be estimated in practice through methods like finite differences (from sequential measurements or using a local array of sensors) or local regression from collected data points. For continuous, real-time operation, employing a multi-antenna system or an array of multiple, closely spaced sensors is often ideal as it enables instantaneous gradient estimation without requiring robot movement for spatial sampling.}
\end{equation} 
This expression reveals the robot's velocity as a scaled version of the local spatial gradient of the normalized signal strength. The scaling factor, dependent on the subjective perception parameters $\alpha$ and $\beta$ and the current signal level, dynamically reshapes the robot's velocity field (Fig.~\ref{fig:RAL_BEASST_SOIs_representation}). The implications of this inherent rescaling mechanism are further discussed in Section~\ref{sec::adapt_param_tuner}.
\vspace{-0.2in}
\subsection{Performance Analysis}
\vspace{-0.05in}
To understand the mechanism for convergence to the source, consider the integrand of $\mathcal{H}_B$, which is $-w(p) \log(w(p))$. As depicted in Fig.~\ref{fig:RAL_BEASST_local_HB}, this function achieves its global minimum exclusively when the perceived signal strength $w(p)$ approaches either $0$ or $1$. While a state of uniformly low perceived signal ($w(p) \approx 0$) would technically minimize $\mathcal{H}_B$, the robot's control law, $\dot{\vect{x}}_r(t) = \nabla \log(w(\vect{x}_r(t)))$, inherently drives it towards \emph{increasing} values of perceived signal strength. This inherent directional bias compels the robot to seek regions where $w(p)$ is maximized (i.e., approaching $1$). Crucially, the condition $w(p) \approx 1$ is uniquely met at the source location, where the actual normalized signal $p(\vect{x})$ is maximized. Consequently, the continuous pursuit of this state of maximal perceived concentration naturally guides the robot directly to the signal peak, where $\mathcal{H}_B$ is minimized to zero. This approach offers a direct, data-driven solution to SS that is independent of explicit signal propagation models. Moreover, the adjustable parameters of Prelec's function provide a powerful mechanism to fine-tune the robot's perceived landscape, thereby influencing its SS dynamics and pursuit strategy. This fundamental principle, demonstrating how continuous minimization of BE drives the robot to the signal peak, forms the basis for the asymptotic convergence result formally stated in Theorem~\ref{thm:convergence}.

\begin{thm}\longthmtitle{Asymptotic Convergence to Unique Source for a class of sources with uni-modal signal strength}
\label{thm:convergence}
Consider a robot whose position $\vect{x}_r(t)$ evolves according to the control law~\eqref{eq:robot_control_law},
where $p(\vect{x})$ is the normalized measured signal strength defined according to~\eqref{eq::normal_Pdm}. Let Assumption~\ref{assump::Unimodal} hold. Then, the robot's position $\vect{x}_r(t)$ will asymptotically converge to the unique source location $\vect{x}_s$ for any $\alpha,\beta>0$.
\end{thm}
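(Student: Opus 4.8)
The plan is to recognize the closed-loop dynamics~\eqref{eq:robot_control_law} as a pure gradient ascent flow on the scalar field $\phi(\vect{x}) := \log w(p(\vect{x}))$ and to build a Lyapunov function out of $\phi$ directly. Using Prelec's form~\eqref{eqn:prelec} with $\beta$ fixed, I first rewrite $\phi(\vect{x}) = -\beta\,(-\log p(\vect{x}))^{\alpha}$. Because $p(\vect{x})\in(0,1]$ under~\eqref{eq::normal_Pdm} and Assumption~\ref{assump::Unimodal}, the term $(-\log p(\vect{x}))^{\alpha}$ is nonnegative and vanishes iff $p(\vect{x})=1$, i.e.\ iff $\vect{x}=\vect{x}_s$. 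Hence $\phi \le 0$ with a unique global maximum of $0$ at the source. This motivates the candidate $V(\vect{x}) := -\phi(\vect{x}) = \beta\,(-\log p(\vect{x}))^{\alpha}$, which is continuous, satisfies $V(\vect{x}_s)=0$ and $V(\vect{x})>0$ for $\vect{x}\neq\vect{x}_s$, and is radially unbounded: since $p(\vect{x})\to 0$ as $\|\vect{x}\|\to\infty$, we have $-\log p(\vect{x})\to+\infty$ and thus $V(\vect{x})\to+\infty$. Radial unboundedness makes the sublevel sets $\{V\le V(\vect{x}_r(0))\}$ compact and positively invariant, which I would use to guarantee boundedness of trajectories and hence a well-defined limit set.

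Next I would differentiate $V$ along the flow. Since $\nabla V = -\nabla\phi$ and the control law is exactly $\dot{\vect{x}}_r = \nabla\phi$, the chain rule gives $\dot V = \nabla V\cdot\dot{\vect{x}}_r = -\|\nabla\phi\|^2 \le 0$, so $V$ is nonincreasing. The crux is then to characterize the zero set $\{\dot V = 0\}=\{\nabla\phi=0\}$ and invoke LaSalle's invariance principle. Using the explicit scaling factor from~\eqref{eq::gradient_flow}, namely
\[
\nabla\phi(\vect{x}) = \frac{\beta\alpha\,(-\log p(\vect{x}))^{\alpha-1}}{p(\vect{x})}\,\nabla p(\vect{x}),
\]
I would argue that for every $\vect{x}\neq\vect{x}_s$ the scalar prefactor is strictly positive (there $p\in(0,1)$, so $-\log p>0$ and $(-\log p)^{\alpha-1}>0$ for any $\alpha>0$), while unimodality and differentiability in Assumption~\ref{assump::Unimodal} force $\nabla p(\vect{x})\neq 0$ away from the unique maximum. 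Consequently $\nabla\phi$ vanishes only at $\vect{x}_s$, so $\{\dot V=0\}=\{\vect{x}_s\}$, which is trivially invariant. LaSalle then yields asymptotic convergence of $\vect{x}_r(t)$ to $\vect{x}_s$ for all $\alpha,\beta>0$.

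The main obstacle I anticipate is twofold. First, I must rule out spurious stationary points: the argument leans entirely on interpreting ``unimodal with a single global maximum'' as $\nabla p$ vanishing nowhere except at $\vect{x}_s$ (no saddles, plateaus, or flat shoulders), and I would make this reading of Assumption~\ref{assump::Unimodal} explicit, since a genuinely flat region would introduce extra equilibria of the flow. Second, the prefactor $(-\log p)^{\alpha-1}/p$ behaves very differently across the $\alpha$ regimes near the source: for $\alpha<1$ it diverges as $p\to 1$, so the velocity field~\eqref{eq:robot_control_law} fails to be Lipschitz (indeed may be unbounded) at $\vect{x}_s$, raising well-posedness questions for trajectories entering a neighborhood of the source and meaning $V$ is only $C^1$ on $\{p<1\}$. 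I would handle this by noting that the decrease $\dot V\le 0$ together with radial unboundedness still confines the trajectory to a compact sublevel set and drives $V\to 0$, so the convergence conclusion survives even though the near-source regularity is delicate; for a cleaner statement one can restrict the LaSalle analysis to the smooth region $\{p<1\}$ and treat $\vect{x}_s$ as the attracting boundary point at which $V=0$.
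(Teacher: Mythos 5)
Your proposal is correct and follows essentially the same route as the paper: the identical Lyapunov function $V=-\log(w(p(\vect{x}_r)))$ (you merely write it out explicitly as $\beta(-\log p)^{\alpha}$ via Prelec's form), the same computation $\dot V=-\|\nabla\log w\|^{2}\le 0$, and the same LaSalle argument using the factorization in~\eqref{eq::gradient_flow} to show the gradient vanishes only at $\vect{x}_s$. Your added caveats about ruling out flat regions and about the non-Lipschitz prefactor near the source when $\alpha<1$ are legitimate refinements that the paper's proof does not address, but they do not change the underlying argument.
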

\begin{proof}
Consider the Lyapunov function candidate $V(\vect{x}_r(t)) = - \log(w(p(\vect{x}_r(t))))$. For any $\alpha,\beta>0$, we have $V(\vect{x}_s) = - \log(w(1)) = 0$. For $\vect{x}_r \neq \vect{x}_s$, Assumption~\ref{assump::Unimodal} ensures $p(\vect{x}_r) < 1$. Since $w(p)$ is strictly increasing on $(0,1]$ with $w(1)=1$, we have $w(p(\vect{x}_r)) < 1$, implying $V(\vect{x}_r) > 0$. As $\|\vect{x}_r-\vect{x}_s\|\to\infty$, $w(p(\vect{x}_r))\to 0^+$ and $V(\vect{x}_r)\to +\infty$, confirming $V$ is radially unbounded. 

The time derivative along trajectories is 
\begin{align*}\dot{V}(\vect{x}_r(t)) = &-\nabla \log(w(p(\vect{x}_r(t)))) \cdot \nabla \log(w(p(\vect{x}_r(t)))) \\=& -\|\nabla \log(w(p(\vect{x}_r(t))))\|^2 \leq 0\end{align*}
By LaSalle's Invariance Principle~\cite{HKK:02}, we analyze where $\dot{V} = 0$, i.e., where $\nabla \log(w(p(\vect{x}_r))) = \vect{0}$.
From~\eqref{eq::gradient_flow} we have $\nabla \log(w(p(\vect{x}_r))) = \frac{\beta \alpha (-\log p(\vect{x}_r))^{\alpha-1}}{p(\vect{x}_r)} \nabla p(\vect{x}_r)$. This equals zero under two possibilities:
\begin{enumerate}
    \item $\nabla p(\vect{x}) = \vect{0}$: For a \emph{unimodal} function like $p(\vect{x})$ with a single peak, the condition $\nabla p(\vect{x}) = \vect{0}$ implies that $\vect{x}$ is a critical point. Due to the assumption that $p(\vect{x})$ has a \emph{unique global maximum} and \emph{diminishes away from it}, the only critical point that also represents the maximum signal is precisely the source location $\vect{x}_s$. 
    \item The scaling factor $\frac{\beta \alpha (-\log p(\vect{x}_r))^{\alpha-1}}{p(\vect{x}_r)}$ is zero: Given that $\beta > 0$, $\alpha > 0$, and $p(\vect{x}_r) \in (0,1]$, this factor can only be zero if $(-\log p(\vect{x}_r))^{\alpha-1} = 0$. This implies $-\log p(\vect{x}_r) = 0$, which means $p(\vect{x}_r) = 1$. By the assumption of a \emph{unique peak value of 1 and diminishing behavior}, this condition $p(\vect{x}_r) = 1$ is satisfied \emph{only} at the source location $\vect{x}_s$.
\end{enumerate}
Thus, the largest invariant set where $\dot{V} \!=\! 0$ is $\{\vect{x}_s\}$. By LaSalle's Invariance Principle~\cite{HKK:02}, $\vect{x}_r(t) \to \vect{x}_s$ as $t \to \infty$.
\end{proof}
The next result considers the impact of imperfect gradient measurements and external disturbances on the source-seeking performance. Such disturbances are inevitable in practice due to sensor noise, communication delays, actuator limitations, and environmental factors. We model these imperfections as bounded additive disturbances and establish practical stability guarantees.
\begin{lem}\longthmtitle{Robustness to Bounded Disturbances}
\label{lem:robustness}
Consider a robot whose position $\vect{x}_r(t)$ evolves according to
\begin{align}
\dot{\vect{x}}_r(t) = \nabla \log(w(p(\vect{x}_r(t))))+\vect{\omega}(t), \label{eq:robot_control_law_noise}
\end{align}
where $\vect{\omega}(t)$ is a bounded disturbance satisfying $\|\vect{\omega}(t)\| \leq \bar{\omega}$ for some $\bar{\omega} \geq 0$.
Let $p(\vect{x})$ be the normalized signal strength defined in ~\eqref{eq::normal_Pdm}, and let Assumption~\ref{assump::Unimodal} (unimodal signal with unique maximum) hold.
Then, for any $\alpha,\beta>0$, the robot's position $\vect{x}_r(t)$ will ultimately converge to and remain within a compact ball of $\mathcal{B}_{R(\bar{\omega})}(\vect{x}_s)$ of radius $R(\bar{\omega})$ centered at the source location $\vect{x}_s$, where $R(\bar{\omega}) \to 0$ as $\bar{\omega} \to 0$.
\end{lem}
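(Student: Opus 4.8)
The plan is to recycle the Lyapunov function from Theorem~\ref{thm:convergence}, namely $V(\vect{x}_r)=-\log(w(p(\vect{x}_r)))=\beta(-\log p(\vect{x}_r))^{\alpha}$, whose positive-definiteness about $\vect{x}_s$ and radial unboundedness are already established there. Abbreviating the nominal drift as $\vect{g}(\vect{x})=\nabla\log(w(p(\vect{x})))$ and observing that $\nabla V=-\vect{g}$, I would differentiate $V$ along the \emph{disturbed} dynamics~\eqref{eq:robot_control_law_noise}:
\begin{align*}
\dot{V}=\nabla V\cdot(\vect{g}+\vect{\omega})=-\|\vect{g}\|^{2}-\vect{g}\cdot\vect{\omega}.
\end{align*}
By Cauchy--Schwarz together with $\|\vect{\omega}(t)\|\le\bar{\omega}$, this yields $\dot{V}\le-\|\vect{g}\|\,(\|\vect{g}\|-\bar{\omega})$, which is strictly negative whenever $\|\vect{g}(\vect{x}_r)\|>\bar{\omega}$. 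This is exactly the input-to-state-stability decrease condition, so the disturbance enters the analysis only through the magnitude of $\vect{g}$.

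The second step is to translate this decrease condition into ultimate boundedness. Define the ``stalling set'' $\mathcal{B}_{\bar{\omega}}=\{\vect{x}:\|\vect{g}(\vect{x})\|\le\bar{\omega}\}$, on whose complement $\dot{V}<0$. By the critical-point analysis in the proof of Theorem~\ref{thm:convergence}, $\vect{g}(\vect{x})=\vect{0}$ holds only at $\vect{x}_s$, so $\mathcal{B}_{0}=\{\vect{x}_s\}$ and, by continuity of $\vect{g}$, the sets $\mathcal{B}_{\bar{\omega}}$ shrink to $\{\vect{x}_s\}$ as $\bar{\omega}\downarrow 0$. I would then invoke the standard ultimate-boundedness/ISS theorem (e.g.~\cite[Thm.~4.18]{HKK:02}): trajectories enter in finite time, and thereafter remain within, the sublevel set $\Omega_{c}=\{\vect{x}:V(\vect{x})\le c(\bar{\omega})\}$, where $c(\bar{\omega})=\max_{\vect{x}\in\mathcal{B}_{\bar{\omega}}}V(\vect{x})$. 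Since $V$ is continuous with $V(\vect{x}_s)=0$ and $\mathcal{B}_{\bar{\omega}}\to\{\vect{x}_s\}$, we obtain $c(\bar{\omega})\to 0$.

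Finally, I would convert the sublevel set into the claimed metric ball. Because $V$ is continuous, positive-definite about $\vect{x}_s$, and radially unbounded, for each $c>0$ the component of $\Omega_{c}$ containing $\vect{x}_s$ is contained in a ball $B_{R(c)}(\vect{x}_s)$ with $R(c)\to 0$ as $c\to 0$; setting $R(\bar{\omega})=R(c(\bar{\omega}))$ then gives the stated radius with $R(\bar{\omega})\to 0$ as $\bar{\omega}\to 0$. The main obstacle I anticipate is \emph{not} the local behavior near $\vect{x}_s$ but the far field: ruling out drift to infinity requires a uniform lower bound $\|\vect{g}(\vect{x})\|>\bar{\omega}$ outside a neighborhood of $\vect{x}_s$, i.e.\ compactness of $\mathcal{B}_{\bar{\omega}}$. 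This is delicate precisely because $\nabla p(\vect{x})$ may vanish as $\|\vect{x}\|\to\infty$ --- the very ``vanishing-gradient'' phenomenon motivating the paper. I would resolve it by exploiting that the scaling factor $\tfrac{\beta\alpha(-\log p)^{\alpha-1}}{p}$ in~\eqref{eq::gradient_flow} diverges as $p\to 0$, counteracting the decaying $\nabla p$; making this rigorous likely requires strengthening Assumption~\ref{assump::Unimodal} with a mild coercivity/tail condition ensuring $\|\vect{g}\|$ is bounded away from zero on every set $\{\vect{x}:\|\vect{x}-\vect{x}_s\|\ge\delta\}$.
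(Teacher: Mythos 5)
Your proposal is correct and follows essentially the same route as the paper's proof: the same Lyapunov function $V=-\log(w(p(\vect{x}_r)))$, the same decrease inequality $\dot V=-\|\vect{g}\|^2-\vect{g}\cdot\vect{\omega}$ bounded via Cauchy--Schwarz (the paper adds Young's inequality to reach the ISpS form $\dot V\le -\tfrac12\gamma(\|\vect{x}_r-\vect{x}_s\|)+\tfrac12\bar{\omega}^2$), and the same definition of the ultimate radius as the smallest $\delta$ for which the gradient lower bound $\gamma(\delta)$ dominates $\bar{\omega}^2$. The far-field obstacle you flag is a fair observation --- the paper dispatches it with an appeal to ``continuity and compactness arguments'' on the non-compact set $\{\|\vect{x}_r-\vect{x}_s\|\ge\delta\}$, which, as you note, implicitly relies on the boundedness of $\mathcal{E}$ or on the divergence of the Prelec scaling factor as $p\to 0$ rather than on a stated tail condition.
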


\begin{proof}
{Consider the Lyapunov function $V(\vect{x}_r) = -\log(w(p(\vect{x}_r)))$ as in the noisy-free case from Theorem~\ref{thm:convergence}. Differentiating $V$  along trajectories ~\eqref{eq:robot_control_law_noise} gives:}
\begin{align}
\dot{V} &= \nabla V \cdot \dot{\vect{x}}_r = -\|\vect{g}(\vect{x}_r)\|^2 - \vect{g}(\vect{x}_r) \cdot \vect{\omega}
\end{align}
where $\vect{g}(\vect{x}_r) = \nabla \log(w(p(\vect{x}_r)))$. Applying the Cauchy-Schwarz inequality and Young's inequality we obtain:
\begin{align}
\dot{V} \leq -0.5\,\|\vect{g}(\vect{x}_r)\|^2 + 0.5\,\bar{\omega}^2. \label{eq:lyap_ineq}
\end{align}
Under Assumption~\ref{assump::Unimodal}, $p(\vect{x})$ has a unique global maximum at $\vect{x}_s$, which implies $\nabla p(\vect{x}) \neq 0$ for all $\vect{x} \neq \vect{x}_s$. Since $w(p)$ is strictly increasing with $w'(p) > 0$ for $p \in (0,1)$, {the composite gradient} $\vect{g}(\vect{x}_r) = \frac{w'(p(\vect{x}_r))}{w(p(\vect{x}_r))} \nabla p(\vect{x}_r) \neq 0$ for all $\vect{x}_r \neq \vect{x}_s$.
By continuity of $\vect{g}(\vect{x}_r)$ and compactness arguments, for any $\delta > 0$, there exists $\gamma(\delta) > 0$ such that
$\|\vect{g}(\vect{x}_r)\|^2 \geq \gamma(\delta)$, whenever $ \|\vect{x}_r - \vect{x}_s\| \geq \delta$. Moreover, $\gamma(\delta)$ is increasing in $\delta$ and $\gamma(\delta) \to 0^+$ as $\delta \to 0^+$. Substituting into~\eqref{eq:lyap_ineq}, we~obtain:
\begin{align}
\dot{V} \leq -0.5\,\gamma(\|\vect{x}_r - \vect{x}_s\|) + 0.5\,\bar{\omega}^2.
\end{align}
{This has the standard ISpS structure:}
\begin{align}
\dot{V} \leq -\gamma_1(\|\vect{x}_r - \vect{x}_s\|) + \gamma_2(\|\vect{\omega}\|)
\end{align}
where $\gamma_1(s) = \frac{\gamma(s)}{2}$ and $\gamma_2(s) = \frac{s^2}{2}$ are class $\mathcal{K}$ functions. Since $V$ is radially unbounded (as established in Theorem~\ref{thm:convergence}), this establishes Input-to-State Practical Stability (ISpS)~\cite{Sontag:2008}. Define $R(\bar{\omega}) = \inf\{\delta \geq 0 : \gamma(\delta) \geq \bar{\omega}^2\}$. By ISpS theory, the trajectory ultimately enters and remains within the compact ball $\mathcal{B}_{R(\bar{\omega})}(\vect{x}_s)$ of radius $R(\bar{\omega})$ centered at $\vect{x}_s$. Finally, since $\gamma(\delta) \to 0^+$ as $\delta \to 0^+$, for any $\bar{\omega} > 0$, we can find arbitrarily small $\delta > 0$ such that $\gamma(\delta) < \bar{\omega}^2$, which implies $R(\bar{\omega}) \to 0$ as $\bar{\omega} \to 0$.
\end{proof}

\begin{figure}[tbp]

    \centering
    \begin{subfigure}[t]{\columnwidth}
   \centering 
  \includegraphics[width=0.90\linewidth]{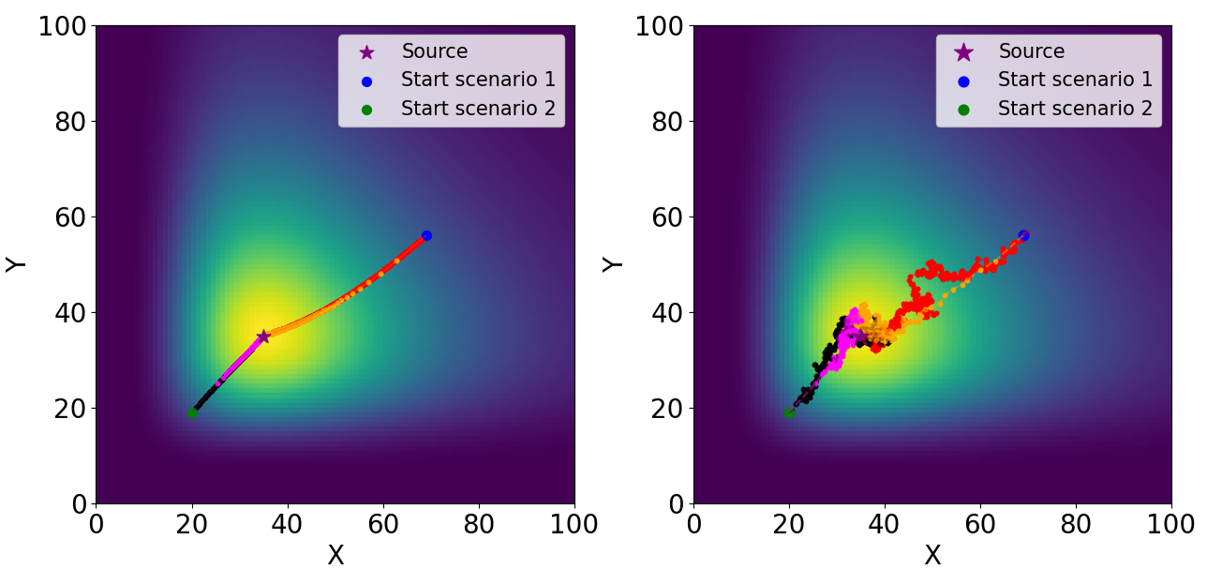}
  \caption{{\small The actual Signal Strength}}
  \label{subfig4:convergence_SF}
\end{subfigure}
\medskip
\begin{subfigure}[t]{\columnwidth}
   \centering 
  \includegraphics[width=0.9\linewidth]{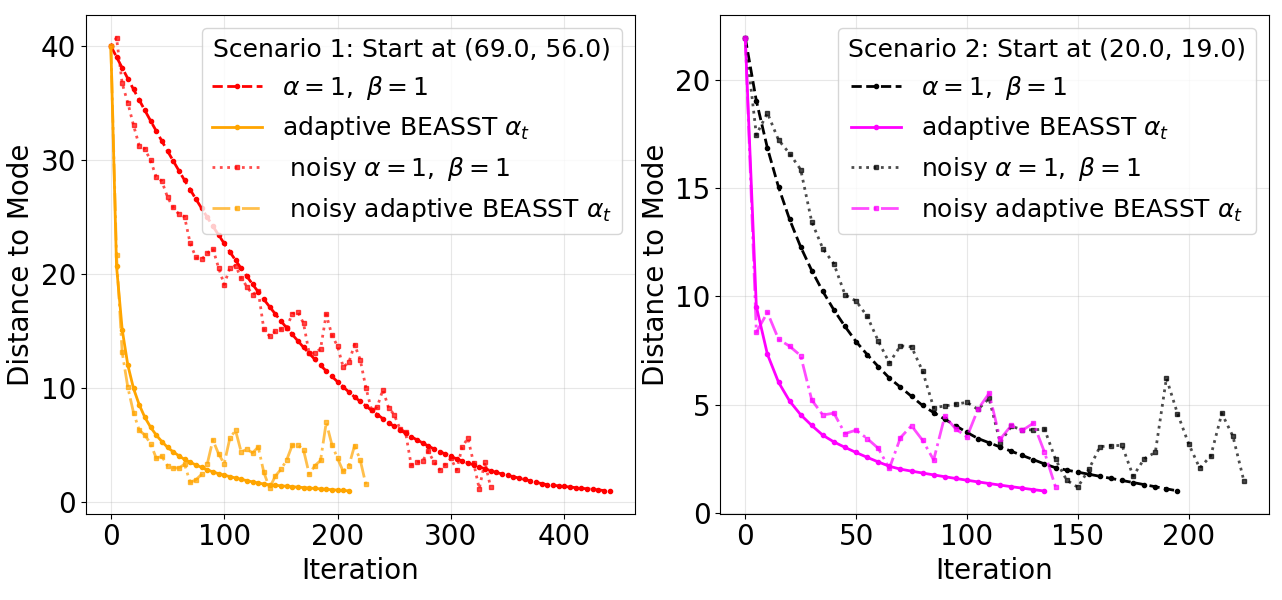}
  \caption{Convergence plot}
  \label{subfig4:convergence}
\end{subfigure}\hfil 
\vspace{-0.1in}
    \caption{{\small Comparison of convergence steps for BEASST with adaptive tuning, and its noisy variant versus fixed $\alpha=\beta=1$, which corresponds to use of SE from two distinct starting positions( Scenario 1 and 2).
    }}
    \label{fig:RAL_BEASST_shannon_slow_convergence}
\end{figure}\vspace{-0.1cm}
\subsection{Adaptive Parametric Tuner}
\label{sec::adapt_param_tuner}
\vspace{-0.05in}
As evident in~\eqref{eq::gradient_flow}, the robot's velocity field is scaled by parameters $\alpha$ and $\beta$. We set $\beta=1$ as it serves merely as a uniform scaling factor. In contrast, we leverage $\alpha$, which dynamically reshapes this gradient profile and determines the robot's search behavior. Specifically, $\alpha$ critically influences how the robot perceives and responds to signal strength: when $\alpha > 1$, the gradient magnitude is amplified in regions far from the source (where $p$ is low), promoting aggressive, rapid movement towards potential signal peaks and prioritizing broader exploration. Conversely, for $\alpha < 1$, the gradients become more pronounced near the source (where $p$ is high), encouraging cautious, precise movement essential for fine localization and preventing overshoot. These effects on the perceived signal landscape and corresponding gradients are visually demonstrated in Fig.~\ref{fig:RAL_BEASST_SOIs_representation}.

Motivated by this, we propose a dynamic tuning mechanism for $\alpha$ based on perceived signal strength:
\begin{align}
\alpha_{t}=\alpha_{max}-(\alpha_{max}-\alpha_{min})\cdot\varrho(p_{t}(\vect{x})), \label{eq::adapt_alpha}
\end{align}
where $\varrho(p(\vect{x}))$ is a smooth transition function (e.g., a logistic function) that maps the normalized signal strength $p(\vect{x})$ to a value between 0 and 1. This adaptive tuning allows the robot to seamlessly transition its behavior based on its current proximity to the source.

The crucial reshaping capability and tuning power are lost with Shannon differential entropy ($\alpha=\beta=1$ in BE $\mathcal{H}_B$). In this case, the control law~\eqref{eq:robot_control_law} simplifies to $\dot{\vect{x}}_r(t) = \frac{1}{p(\vect{x}_r(t))} \nabla p(\vect{x}_r(t))$. This non-adaptive approach leads to significantly slower convergence, especially from areas with low signal strength, unlike our method which enables effective SS even with weaker signals. This property of the tuning mechanism is beneficial for SS in unknown environments, as it expedites the process by allowing a lower signal detection threshold for transitioning from exploration to source~seeking. This effect is clearly illustrated in two scenarios in Fig.~\ref{fig:RAL_BEASST_shannon_slow_convergence} where the underlying probability field is modeled by a log-normal distribution and the convergence to the source is faster using our adaptive algorithm. We choose log normal in this example to demonstrate the effectiveness of our algorithm even in asymmetric signals that closely match real SoIs, such as acoustic signals. Furthermore, Fig.~\ref{fig:RAL_BEASST_shannon_slow_convergence} also shows that BEASST maintains convergence under bounded Gaussian noise, demonstrating robustness of the behavioral entropic gradients and confirming the theoretical stability guarantees established in Lemma~\ref{lem:robustness}.

\begin{algorithm}[t]
{\small
\caption{BEASST}
\label{alg:BEASST_multiple_sources}
\SetAlgoLined
Input: Occupancy map $\mathcal{M}_D$, measured signal strength $P_{dBm}(\vect{x})$, its gradient $\nabla P_{dBm}(\vect{x})$, number of sources $N_s$; \\
Output: Robot waypoints $\{\vect{x}_{t+1}\}$; \\
Initialize: {$N_{\text{detected}} = 0$}, $\vect{x}_0$, $\alpha,\beta=1$,$\tau$,$t=0$;

\While{$N_{\text{detected}} < N_s$}{

    $P_{dBm}(\vect{x}_t),\nabla P_{dBm}(\vect{x}_t) \leftarrow \text{SourceSensing}(x_t)$;\\
    {$p_t(\vect{x}) \gets \phi(P_{dBm}(\vect{x}_t))$; \tcp{\scriptsize Normalize using Eq. \ref{eq::normal_Pdm}}}

    \If{$p_t(\vect{x}) > \tau$}{
        $\alpha_{t}=\alpha_{max}-(\alpha_{max}-\alpha_{min})\cdot\varrho(p_{t}(\vect{x}))$; \\
         $\nabla \log(w(p_t(\vect{x}))) \gets \text{Gradient}(x_t)$; \tcp{\scriptsize Eq.\ref{eq::gradient_flow}} 
        
        $\vect{x}_{t+1} \gets \vect{x}_t + \gamma \nabla \log(w(p_t(\vect{x})))$; 
        
        \If{Robot reaches a source location (e.g., $\vect{x}_t \approx \vect{x}_s$)}{
            {$N_{\text{detected}} \gets N_{\text{detected}} + 1$;} \\

{$(\mathcal{S}_{\text{pending}},  \mathcal{S}_{\text{detected}}) \leftarrow
  \text{UpdateSourceSets}(\mathcal{S}_{\text{pending}}, \mathcal{S}_{\text{detected}}, \vect{x}_t)$ }\;
        }
    }{
        $ p(occ(\mathbf{x})) \to \mathcal{N}(\mu(\vect{x}), \sigma^2(\vect{x}))$; \tcp {\scriptsize GP map} 
        
        $\vect{x}_{t+1} \gets \text{GPBasedExp}(\mathcal{M}_D, \text{p(occ(x))})$; \\
    }{$\mathcal{M}_D$ $\gets \text{Update}$ ($\mathcal{M}_D$,$\text{p(occ(x))}$); \tcp {\scriptsize SLAM Update~\cite{CN-JR-HK:14}}}
}}
\end{algorithm}

\begin{figure}[tbp]
\centering
\includegraphics[width=0.40\textwidth]{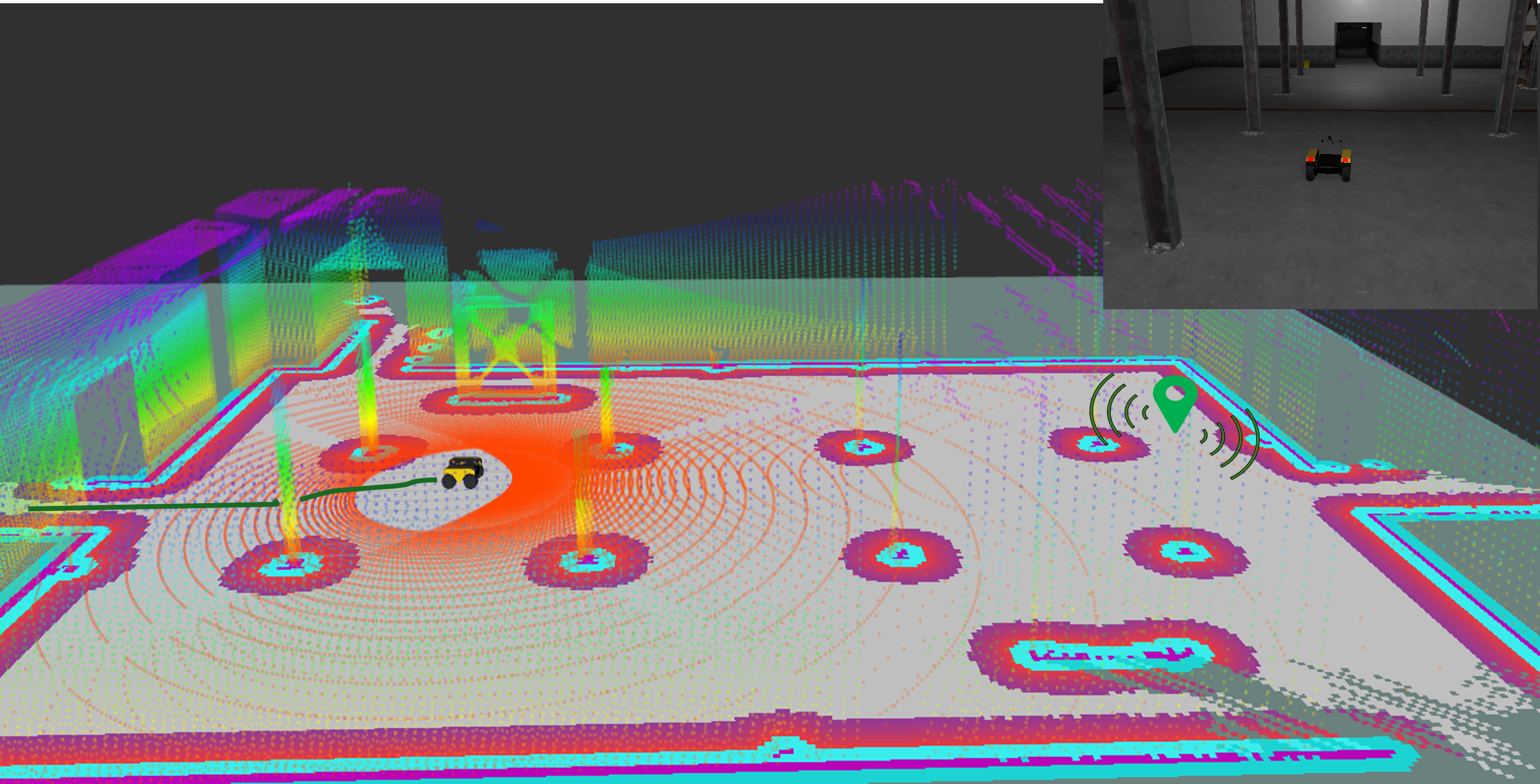}
  \caption{{\small An example Rviz visualization for Subterranean (SubT) Unity Environment. 
  The robot must autonomously navigate and search for sources in the environment, represented by green pin.
  }} \label{fig::RAL_BEASST_unity}
  \vspace{-0.1in}
\end{figure}
\begin{figure}[tbp]
\centering

\begin{subfigure}[t]{0.32\columnwidth}
 \centering
 \includegraphics[width=\linewidth]{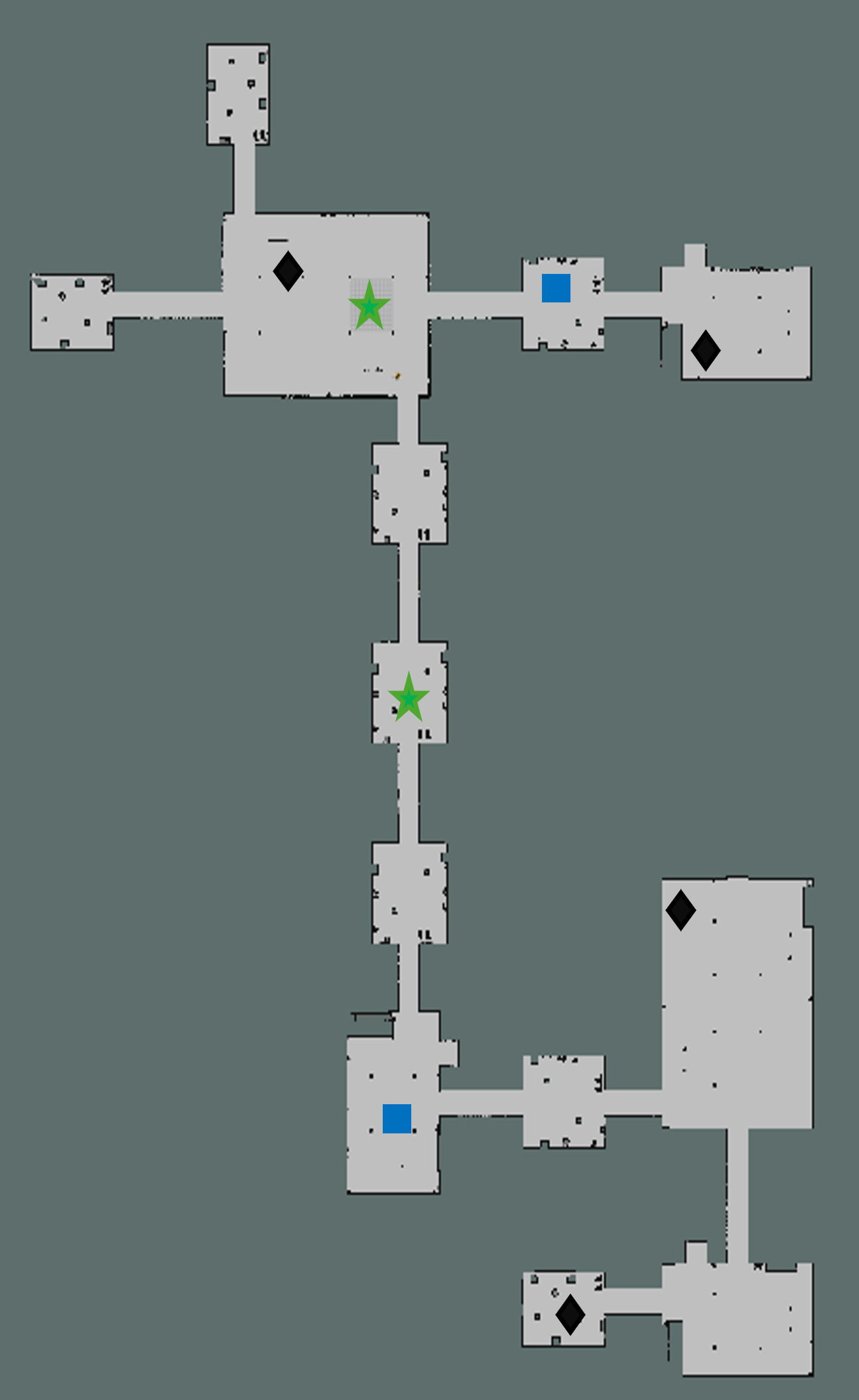} 
\caption{\small DARPA SubT map ($200\text{m} \times 302\text{m}$).  }
\label{fig4:gt_uc1}
 \end{subfigure}
 \hfill
 \begin{subfigure}[t]{0.32\columnwidth}
\centering
 \includegraphics[width=\linewidth]{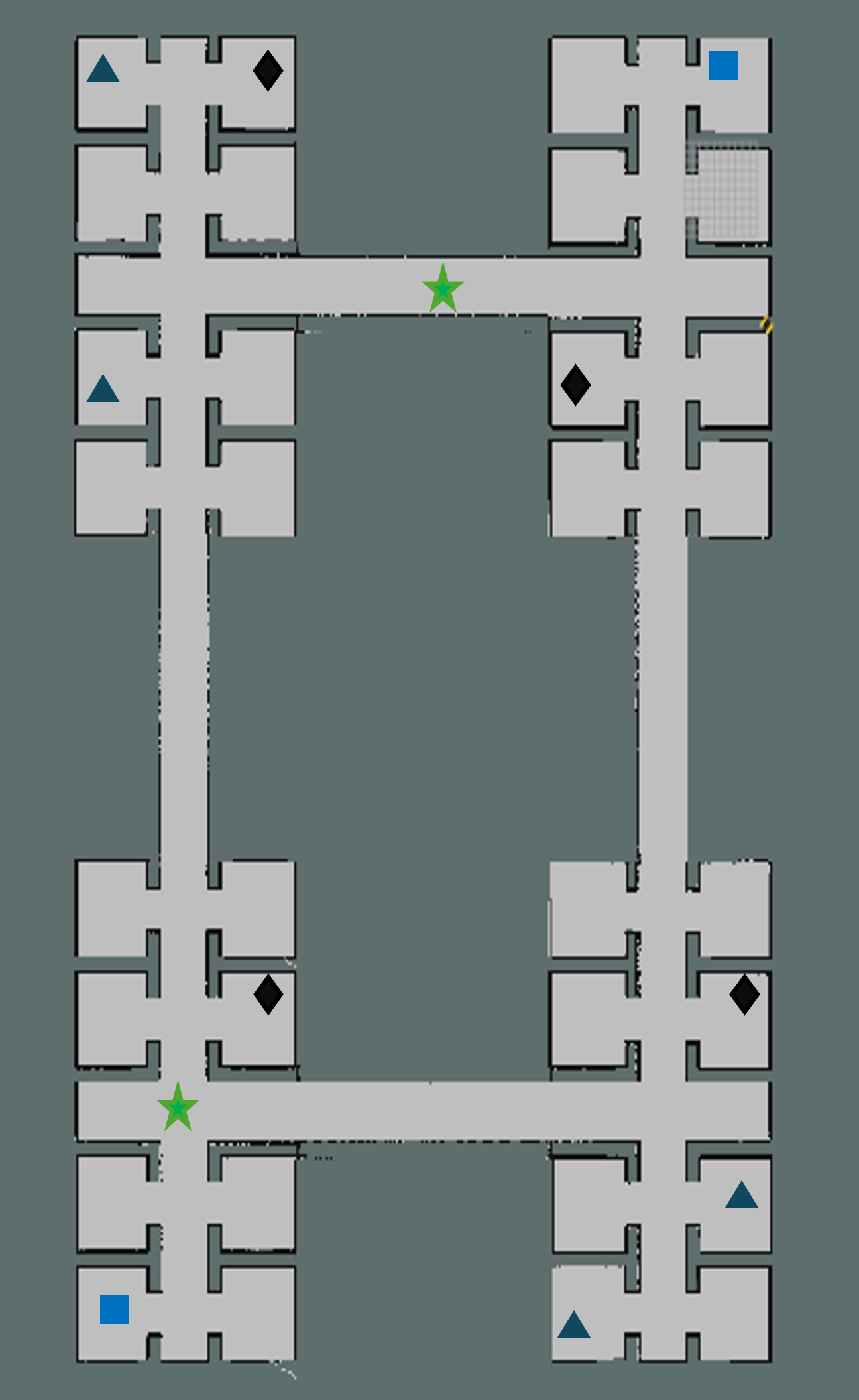} 
 \caption{\small $32$-room map. ($125\text{m} \times 171\text{m}$) }
 \label{fig4:gt_32room}
 \end{subfigure}
 \hfill
 \begin{subfigure}[t]{0.32\columnwidth}
\centering
 \includegraphics[width=\linewidth]{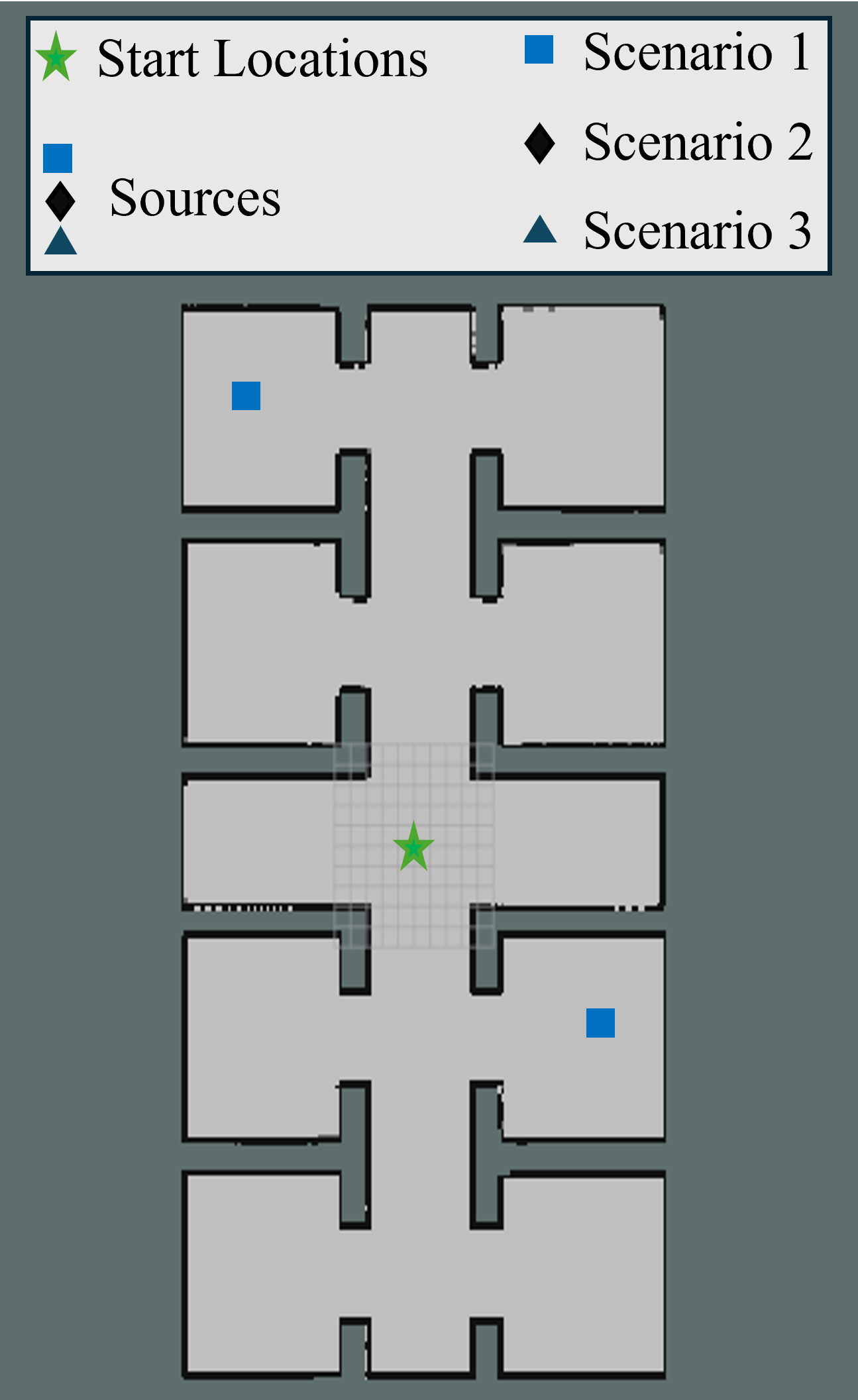} 
 \caption{\small $8$-room map  ($57\text{m} \times 78\text{m}$) }
 \label{fig4:gt_8room}
 \end{subfigure}
\caption{{\small Subfigures (a)-(c) illustrate the complete map DARPA SubT Challenge, 32-room, and 8-room environments and their scale in meters(m). These environments include different source locations and configurations used in our experiments, where each scenario is tested over ten trials (see Fig.~\ref{fig::RAL_BEASST_histogram_scenarios}}).}
 \label{fig:RAL_BEASST_UnityEnvironments_gt}
\end{figure}
\section{ Behavioral Source Seeking with Exploration}
\label{sec::BEASST_framework}
\vspace{-0.05in}
Our BEASST framework combines exploration and SS, allowing the robot to dynamically prioritize its objectives. The complete behavioral logic is encapsulated in Algorithm~\ref{alg:BEASST_multiple_sources}.
The robot's exploration phase initiates with the construction of a continuous map from its occupancy grid map $\mathcal{M}_D$ using a GP Process. The predicted occupancy probability at any point $\mathbf{x}\in\mathcal{E}$ is modeled by a Gaussian distribution $
p(\text{occ}(\mathbf{x})) = \mathcal{N}(\mu(\mathbf{x}), \sigma^2(\mathbf{x}))$,
where $\mu(\mathbf{x})$ and $\sigma^2(\mathbf{x})$ represent the mean and variance of the GP at $\mathbf{x}$, respectively. Following~\cite{MA-HJ-NR-LL:23}, the GP-based exploration operates as a frontier-based strategy, enabling the robot to navigate towards regions offering the highest informational gain, rather than merely the closest frontier represented. This function identifies frontiers with maximum occupancy uncertainty by utilizing the differential entropy of the Gaussian distribution at $\mathbf{x}$, $
\mathcal{H}(\mathbf{x}) =\frac{1}{2} \log(2\,\pi \, \text{e})+\frac{1}{2} \log(\sigma^2(\mathbf{x})).$
The robot's movement during this phase is driven by an exploration utility function, GPBasedExp(.), that considers exploration uncertainty ($u_{fi}$), area of high-variance regions ($a_{fi}$), and the GP frontier direction ($\theta_{fi}$). The robot selects the GP frontier that yields the maximum exploration gain as its next navigation subgoal. If all identified GP frontiers reside within already explored (certain) locations, the robot reverts to selecting the nearest map frontier node from its metric-topological map. Subsequently, it computes the shortest path to this chosen map frontier using the A* algorithm, and the explored nodes along this path then serve as intermediate navigation waypoints. This comprehensive approach ensures efficient environmental coverage and map refinement. 

The robot initially explores the unknown environment. Upon detecting a signal, above a predefined normalized threshold $\tau>0$, it transitions to SS, driven by the gradients \eqref{eq::gradient_flow} from normalized signal strength $p_t(\vect{x})$. This operation is performed by SourceSensing(.) function in Algorithm~\ref{alg:BEASST_multiple_sources}. The gradient is computed using Gradient(.), which naturally guides the robot towards the strongest perceived signal using adaptive $\alpha_{t}$. For practical implementation, a discrete-time update for the robot's position is given in Line 10 of Algorithm~\ref{alg:BEASST_multiple_sources}, where $\vect{x}(t)$ is the desired waypoint for the robot to track. {When multiple SoIs are detected, the framework explicitly maintains a set of detected sources, dividing them into two categories: pending ($S_\textit{pending}$) and detected ($S_\textit{detected}$). Detected but unverified SoIs are stored as `virtual frontiers' (pending sources) for later inspection. Once the robot verifies the currently pursued source, it marks it as detected and removes it from the active list; the corresponding region is masked in the signal map to prevent re-targeting using the UpdateSourceSets.~Meanwhile, locations of other significant but unverified signals remain stored as virtual frontiers, serving as waypoints for future exploration in areas with pending sources. This process ensures comprehensive coverage of all detected regions. If no significant signal is detected, the robot prioritizes exploration, moving towards regions of high entropy based on GP-based exploration and Omnimapper~\cite{CN-JR-HK:14} updates the map (Update Function).~This adaptive switching allows the robot to efficiently locate multiple sources.}

\vspace{-0.5 cm}

\section{Unity-ROS Simulation and Results}
\label{sec::Simulation and Results}

\begin{figure*}
 \centering
 \includegraphics[width=\linewidth]{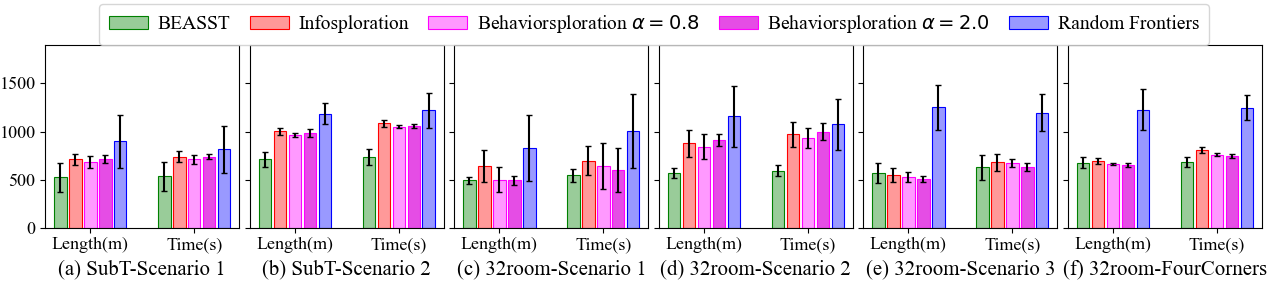}
 \vspace{-0.15in}
 \caption{{\small Path length and time taken by the robots over ten runs in different selected scenarios of DARPA SubT (Fig.~\ref{fig4:gt_uc1}) and 32 room (Fig.~\ref{fig4:gt_32room}) are shown. The histograms compare the performance of our proposed algorithm BEASST, against other frontier-based approaches such as Infosploration, Behaviorsploration($\alpha = \{0.8,2.0\}$), and Random Frontiers.}}
 \label{fig::RAL_BEASST_histogram_scenarios}
  \vspace{-0.15in}
\end{figure*}

To demonstrate BEASST's robust performance in complex environments, we conducted experiments in a Unity-ROS setup based on the DARPA Subterranean Challenge (SubT) (Fig.~\ref{fig::RAL_BEASST_unity}) and custom 8-room and 32-room Unity environments (Fig.~\ref{fig:RAL_BEASST_UnityEnvironments_gt}). This section presents key quantitative metrics, baseline comparisons, and insights from $\alpha$'s adaptive tuning.

\begin{table*}
 \centering
 \scriptsize
 \begin{tabular}{lcccc}
 \toprule
& \multicolumn{2}{c}{\textbf{Exponential Decay Model}}
 & \multicolumn{2}{c}{\textbf{Path-Loss Model}} \\
\cmidrule(lr){2-3} \cmidrule(lr){4-5}
 \textbf{Method}
 & \textbf{Length (m) ($\mu\pm\sigma$)} & \textbf{Time (s) ($\mu\pm\sigma$)}
& \textbf{Length (m) ($\mu\pm\sigma$)} & \textbf{Time (s) ($\mu\pm\sigma$)} \\
 \midrule

 \multicolumn{5}{c}{\textbf{DARPA SubT Challenge}} \\
\midrule
 \textbf{BEASST} 
& $\textbf{616.2}\pm \textbf{115.2}$ & $\textbf{634.8} \pm \textbf{115.6}$
 & $\textbf{866.3} \pm \textbf{98.6}$ & $\textbf{824.8} \pm \textbf{107.2}$ \\
{BEASST with noise}
 & ${728.3 \pm 166.2}$ & ${721.2 \pm 156.0}$
 & ${979.3 \pm 117.7}$ & ${967.6 \pm 116.1}$ \\
 Infosploration 
 & $855.2 \pm 45.3$ & $912.2 \pm 47.4$ 
& $1018.4 \pm 110.3$ & $1031.7 \pm 115.1$ \\
 Behaviorsploration ($\alpha=0.8$) 
 & $824.1 \pm 38.7$ & $879.3 \pm 32.6$ 
 & $976.1 \pm 27.8$ & $946.6 \pm 21.4$ \\
 Behaviorsploration ($\alpha=2$) 
 & $848.4 \pm 39.7$ & $898.4 \pm 25.7$ 
 & $1093.4 \pm 39.7$ & $1045.5 \pm 25.6$ \\
Random Frontiers 
 & $1041.8 \pm 196.7$ & $1172.0 \pm 194.2$
 & $1332.8 \pm 213.2$ & $1403.8 \pm 215.5$ \\
 \midrule

 \multicolumn{5}{c}{\textbf{32 Room}} \\
 \midrule
 \textbf{BEASST} 
 & $\textbf{576.2} \pm \textbf{65.1}$ & $\textbf{612.7} \pm \textbf{76.7}$
 & $\textbf{684.3} \pm \textbf{70.4}$ & $\textbf{689.7} \pm \textbf{80.9}$ \\
 {BEASST with noise}
 & ${649.2 \pm 100.7}$ & ${711.3 \pm 113.9}$
 & ${787.3 \pm 102.5}$ & ${802.5 \pm 108.5}$ \\
 Infosploration 
 & $690.1 \pm 101.2$ & $789.6 \pm 100.3$ 
 & $788.6 \pm 105.8$ & $807.5 \pm 105.7$ \\
 Behaviorsploration ($\alpha=0.8$) 
 & $631.8 \pm 79.9$ & $751.6 \pm 101.4$
 & $866.4 \pm 102.8$ & $886.3 \pm 123.8$ \\
 Behaviorsploration ($\alpha=2$) 
& $638.8 \pm 39.5$ & $744.7 \pm 94.3$ 
& $806.6 \pm 75.3$ & $825.8 \pm 122.5$ \\
 Random Frontiers 
 & $1117.7 \pm 278.0$ & $1131.1 \pm 242.6$
 & $1217.7 \pm 266.1$ & $1243.6 \pm 292.5$ \\
 \midrule

 \multicolumn{5}{c}{\textbf{8 Room}} \\
 \midrule
 \textbf{BEASST} 
& $\textbf{87} \pm \textbf{3.3}$ & $\textbf{123.1}  \pm \textbf{19.6}$ &
 $166.1 \pm 10.1$ & $184.3 \pm 23.2 $ \\
 Infosploration 
 & $143.4\pm 14.3$  & $190.2 \pm 14.4$ 
 & $196.21 \pm 19.2$ & $213.8 \pm 25.4$ \\
 Behaviorsploration ($\alpha=0.8$) 
& $95.4 \pm 9.0$  & $139.5 \pm 11.1$ 
 & $166.1 \pm 25.3$ & $184.4 \pm 23.6$ \\
Behaviorsploration ($\alpha=2$) 
 & $142.3 \pm 22.3$ & $177.4 \pm 24.9$ 
 & $\textbf{151.8} \pm \textbf{28.3}$ & $\textbf{156.6} \pm \textbf{32.5}$ \\
 Random Frontiers 
 & $181.03 \pm 61.98$ & $230.6 \pm 49.3$
 & $308.6 \pm 51.4$ & $305.6 \pm 49.2$ \\
 \bottomrule
 \end{tabular}
 
 \caption{{\small The complete summarized statistics for all the different scenarios Fig.~\ref{fig:RAL_BEASST_UnityEnvironments_gt}}. We show the average total path length and time taken over ten runs for locating sources in different scenarios with different source signal models. {The proposed policy (BEASST) and its noisy variant are compared against various methods like Infosploration, Behaviorsploration ($\alpha = \{0.8,2.0\}$), and Random Frontiers.}}
\label{tab:RAL_BEASST_Comparison_with_different_methods}
\vspace{-0.1in}
\end{table*}

\textit{Environment and Robot Setup:} Simulations used a Clearpath Warthog robot, with an Ouster LiDAR and IMU modeled to real hardware specifications. Omnimapper~\cite{CN-JR-HK:14} was used for robot localization ($\vect{x}$) and continuous occupancy map ($\mathcal{O}$) updates. We tested the algorithm across diverse environments to evaluate its source-seeking ability, with the number of sources ($N_s$) predefined per scenario. In source-seeking, robot performance depends on the strength of detected SoIs, often a function of distance. For instance, RF signals follow the Log-Distance Path-Loss model~\cite{LC-JE-AA:22}:
\begin{equation}\label{eq::Log-Distance Path-Loss}
P_{\text{dBm}}(\vect{x}) = L_0 - \underbrace{10n \log_{10}(\|\vect{x}-\vect{x}_s\|)}_{\text{Path Loss}}
- \underbrace{f(\vect{x},\vect{x}_s)}_{\text{Shadowing}}
- \!\!\!\!\!\underbrace{\epsilon}_{\text{Multipath Fading}}
\end{equation}
Here, $L_0$ is the reference power, $n$ is the decay exponent, $f(.,.)$ models shadowing, and $\epsilon$ models multipath fading. This model closely captures the real-world radio-frequency(RF) signal propagation, accounting for common environmental interference such as obstacles and reflective surfaces, enhancing the realism of our simulation. Alternatively, an exponential decay model~\cite{BB-NC-AI:16} can be used directly as a signal strength map (visualized in Fig.~\ref{fig:RAL_BEASST_SOIs_representation}):
\begin{equation}
\label{eq::SOI_model}
\phi(\vect{x}) = {e^{-\kappa \,\tilde{d}(\vect{x},\vect{x}_s)}}
\end{equation}
where $\tilde{d}(\vect{x},\vect{x}_s) = -\log\left(\sum_{i=1}^{N} e^{-d_{sp}(\vect{x}, \vect{x}_{s_i})}\right)$ is a soft minimum approximation of the shortest path distance $d_{sp}(\vect{x}, \vect{x}_{s_i})$ between $\vect{x} \in \mathcal{E}$ and the $i$-th source $\vect{x}_{s_i}$. This model provides an abstraction beneficial for evaluating navigation strategies as it naturally embeds obstacle avoidance. We normalize these signal strength maps as shown above to use them as a surrogate probability density function p(\vect{x}). {The BEASST-generated behavioral gradient field provide high-level waypoints, which are tracked using an MPPI~\cite{GW-PD-ET:16} controller with the system model of our Clearpath Warthog platform, enforcing non-holonomic kinematics, acceleration limits, and obstacle avoidance costs.}


\textit{Results:} We quantitatively evaluate BEASST's performance across different environments. Our framework successfully completes source-seeking missions by efficiently balancing source-seeking and exploration, outperforming state-of-the-art methods. Before delving into the results, we want to explain Behaviorsploration, a variation of Infosploration~\cite{CN-JR-CH:21} that is used as one of our comparison baselines. In this framework, we replace Shannon entropy with BE as the information gain metric. Our objective is to examine the resulting behavior of robots while SS without any parametric tuning for $\alpha$, thereby illustrating the importance of principled tuning. Essentially, the framework uses frontier-based method and uses BE to find the most perceived informative region using a fixed $\alpha$ to locate the source. We use $\alpha = 0.8$ and $2.0$ as these values have been shown in ~\cite{AS-CN-SM:24}, to produce effective and diverse exploration behaviors. Our set of baselines therefore, includes Infosploration~\cite{CN-JR-CH:21}, Behaviorsploration (fixed $\alpha = \{0.8, 2.0\}$), and Random Frontier Based Exploration (purely random frontier selection when no signal is detected). Each environment was tested over ten trials per scenario, varying source placements, robot initial positions, and number of sources. We measured total time taken and total path length. Table~\ref{tab:RAL_BEASST_Comparison_with_different_methods} shows that in environments with consistently detectable signals (e.g., 8-room), BEASST outperforms Infosploration by reaching sources 30 to 50 seconds faster across ten runs for different models. This improvement stems from BEASST's adaptive tuning, enabling aggressive steps with reliable source signals. We set $\alpha_{max} = 2.5$ and $\alpha_{min}=0.5$ in these experiments to prevent gradient overshoot. Furthermore, in environments with intermittently available source signals (32-room, DARPA SubT), BEASST significantly outperformed all other methods by effectively balancing exploration and source seeking. Figure~\ref{fig::RAL_BEASST_histogram_scenarios} and Table~\ref{tab:RAL_BEASST_Comparison_with_different_methods} quantitatively highlight BEASST's advantages. In the SubT environment, our framework achieved an approximate 15\% reduction in total path length and located the source 180 to 200 seconds faster. This improvement is primarily due to the dynamic and adaptive reweighing of uncertainty using our adaptive tuning mechanism: quick, decisive movements in high-confidence regions and cautious, exploratory behavior in low-confidence regions. Similarly, in the 32-room environment, BEASST also demonstrated a significant reduction in time taken to locate the source and total path length. {We also assessed a noisy gradient implementation in the DARPA SubT Challenge and 32-room environments for two signal models. Specifically, we added zero-mean Gaussian noise, $\vect{\omega} \sim \mathcal{N}(\mathbf{0}, \mathbf{I})$ where $\|\vect{\omega}(t)\| \leq 1$, truncated to enforce bounded magnitude in accordance with Lemma~\ref{lem:robustness}. We see that compared to the noise-free BEASST, the noisy-gradient variant increases path length and time by $~12-15 \%$  but in average still outperforms the noise-free benchmarks ($5-11\%$) as shown in the Table~\ref{tab:RAL_BEASST_Comparison_with_different_methods}. These results show the effectiveness of our approach and are consistent with Lemma~\ref{lem:robustness}: additive gradient errors causes a performance loss but do not break convergence.}
\vspace{-0.5 cm}
\section{Conclusion and Future Work}
\vspace{-0.2 cm}
This work presented a novel gradient-based algorithm for robotic source seeking that dynamically adapts trajectory based on signal reliability using Behavioral Entropy with Prelec's probability weighting function. The approach established theoretical convergence guarantees under unimodal signals, provided practical stability under bounded disturbances, and introduced adaptive parameter tuning for autonomous exploration-exploitation transitions integrated with Gaussian Process-based exploration. Experimental validation across DARPA SubT and multi-room scenarios demonstrated that BEASST consistently outperformed state-of-the-art methods, achieving 180-200 seconds faster convergence with reduced path length through intelligent uncertainty-driven navigation. {Future work will address hardware experiments incorporating source measurement noise through robust estimation and control design, incorporate robot pose uncertainty into unified localization-mapping-seeking objectives, and extend the framework to multi-agent scenarios.}

\IEEEpeerreviewmaketitle

\vspace{-0.5 cm}
\section*{Acknowledgments}
\vspace{-0.2 cm}

This work was developed by Donipolo Ghimire during a 2024 summer internship at the U.S. Army Combat Capabilities Development Command Army Research Laboratory (ARL), Adelphi, MD.
\vspace{-0.2 cm}

\bibliography{bib/alias,bib/reference}

\begin{thebibliography}{10}

\bibitem{JF-VK:10}
J.~Fink and V.~Kumar, ``\href{https://ieeexplore.ieee.org/abstract/document/5509574}{Online methods for radio signal mapping with mobile robots},'' in {\em {IEEE} International Conference on Robotics and Automation}, (Anchorage, Alaska, USA), 2010.

\bibitem{FM-TW-CP-KA:18}
F.~Mascarich, T.~Wilson, C.~Papachristos, and K.~Alexis, ``\href{https://ieeexplore.ieee.org/document/8460760}{Radiation Source Localization in GPS-Denied Environments Using Aerial Robots},'' in {\em {IEEE} International Conference on Robotics and Automation}, 2018.

\bibitem{AF-SJ:22}
A.~Francis, S.~Li, C.~Griffiths, and J.~Sienz, ``\href{https://onlinelibrary.wiley.com/doi/abs/10.1002/rob.22109}{Gas source localization and mapping with mobile robots: A review},'' {\em Journal of Field Robotics}, vol.~39, no.~8, pp.~1341--1373, 2022.

\bibitem{GC-KA-PT-AL:17}
G.~Christie, A.~Shoemaker, K.~Kochersberger, P.~Tokekar, L.~McLean, and A.~Leonessa, ``\href{https://arxiv.org/abs/1609.00017}{Radiation Search Operations using Scene Understanding with Autonomous {UAV} and {UGV}},'' {\em Journal of Field Robotics}, vol.~34, no.~8, pp.~1450--1468, 2017.

\bibitem{SA-GP:12}
S.-i. Azuma, M.~S. Sakar, and G.~J. Pappas, ``\href{https://ieeexplore.ieee.org/abstract/document/6145741}{Stochastic Source Seeking by Mobile Robots},'' {\em IEEE Transactions on Automatic Control}, vol.~57, no.~9, 2012.

\bibitem{SL-YG-BB:14}
S.~Li, Y.~Guo, and B.~Bingham, ``\href{https://ieeexplore.ieee.org/document/6906591}{Multi-robot cooperative control for monitoring and tracking dynamic plumes},'' in {\em {IEEE} International Conference on Robotics and Automation}, (Hong Kong), 2014.

\bibitem{CD-OP-JO-AA:23}
C.~E. Denniston, O.~Peltzer, J.~Ott, S.~Moon, S.-K. Kim, G.~S. Sukhatme, M.~J. Kochenderfer, M.~Schwager, and A.-a. Agha-mohammadi, ``\href{https://ieeexplore.ieee.org/abstract/document/10161445}{Fast and Scalable Signal Inference for Active Robotic Source Seeking},'' in {\em {IEEE} International Conference on Robotics and Automation}, (London), 2023.

\bibitem{vergassola2007infotaxis}
M.~Vergassola, E.~Villermaux, and B.~I. Shraiman, ``\href{https://www.nature.com/articles/nature05464}{Infotaxis as a strategy for searching without gradients},'' {\em Nature}, vol.~445, no.~7126, 2007.

\bibitem{SL-RK-YG:14}
S.~Li, R.~Kong, and Y.~Guo, ``\href{https://ieeexplore.ieee.org/document/6698334}{Cooperative Distributed Source Seeking by Multiple Robots: Algorithms and Experiments},'' {\em IEEE/ASME Transactions on Mechatronics}, vol.~19, no.~6, 2014.

\bibitem{spears2006distributed}
W.~M. Spears, D.~F. Spears, J.~C. Hamann, and R.~Heil, ``\href{https://link.springer.com/article/10.1023/B:AURO.0000033970.96785.f2}{Distributed, physics-based control of swarms of vehicles},'' {\em Autonomous Robots}, vol.~17, no.~2-3, 2006.

\bibitem{clark2006cooperative}
J.~Clark and R.~Fierro, ``\href{https://ieeexplore.ieee.org/abstract/document/1470515}{Cooperative hybrid control of robotic sensors for perimeter detection and tracking},'' in {\em Proceedings of the 2005, American Control Conference}, (Portland), 2005.

\bibitem{VJ-DG:12}
V.~Jisha and D.~Ghose, ``\href{https://link.springer.com/article/10.1007/s10846-012-9658-9}{Frontier Based Goal Seeking for Robots in Unknown Environments},'' {\em Journal of {I}ntelligent \& {R}obotic {S}ystems}, vol.~67, 2012.

\bibitem{CN-JR-CH:21}
C.~Nieto-Granda, S.~Wang, V.~Dhiman, J.~Rogers, and H.~I. Christensen, ``\href{https://link.springer.com/chapter/10.1007/978-3-030-71151-1_41}{Distributed Heterogeneous Multi-robot Source Seeking Using Information Based Sampling with Visual Recognition},'' in {\em Experimental Robotics}, (Cham), Springer, 2021.

\bibitem{AS-CN-SM:24}
A.~Suresh, C.~Nieto-Granda, and S.~Martínez, ``\href{https://ieeexplore.ieee.org/document/10608408}{Robotic Exploration Using Generalized Behavioral Entropy},'' {\em IEEE Robotics and Automation Letters (RA-L)}, vol.~9, no.~9, 2024.

\bibitem{AS-SM:21}
A.~Suresh and S.~Mart{\'\i}nez, ``\href{https://ieeexplore.ieee.org/abstract/document/9647016/}{Risk-Perception-Aware Control Design Under Dynamic Spatial Risks},'' {\em IEEE Control Systems Letters}, vol.~6, 2021.

\bibitem{DP:98}
D.~Prelec, ``\href{https://www.jstor.org/stable/2998573?seq=1}{The Probability Weighting Function},'' {\em Econometrica}, vol.~66, no.~3, pp.~497--527, 1998.

\bibitem{WAS-AS-CN:25}
W.~A. Suttle, A.~Suresh, and C.~Nieto-Granda, ``\href{https://openreview.net/forum?id=LuT2CVrlpU}{Behavioral Entropy-Guided Dataset Generation for Offline Reinforcement Learning},'' in {\em International Conference on Learning Representations}, 2025.

\bibitem{MA-HJ-NR-LL:23}
M.~Ali, H.~Jardali, N.~Roy, and L.~Liu, ``\href{https://www.roboticsproceedings.org/rss19/p104.pdf}{Autonomous Navigation, Mapping and Exploration with Gaussian Processes},'' in {\em Robotics: Science and Systems}, 2023.

\bibitem{ST:05}
S.~Thrun, W.~Burgard, and D.~Fox, {\em \href{https://docs.ufpr.br/~danielsantos/ProbabilisticRobotics.pdf}{Probabilistic Robotics}}.
\newblock MIT press, 2005.

\bibitem{SD:16}
S.~Dhami, {\em \href{https://global.oup.com/academic/product/the-foundations-of-behavioral-economic-analysis-9780198861959?cc=us&lang=en&}{The Foundations of Behavioral Economic Analysis}}.
\newblock Oxford University Press, 2016.

\bibitem{Villani:2009}
C.~Villani, {\em \href{https://doi.org/10.1007/978-3-540-71050-9_23}{Gradient flows I}}.
\newblock Berlin: Springer, 2009.

\bibitem{HKK:02}
H.~K. Khalil, {\em \href{https://dl.icdst.org/pdfs/files3/d83d2dc7280085b61da330c9a8ff5e13.pdf}{Nonlinear Systems}}.
\newblock Englewood Cliffs, NJ: Prentice Hall, 3~ed., 2002.

\bibitem{Sontag:2008}
E.~Sontag, {\em \href{https://link.springer.com/chapter/10.1007/978-3-540-77653-6_3}{Input to State Stability: Basic Concepts and Results}}.
\newblock Lecture Notes in Mathematics, Germany: Springer Verlag, 2008.

\bibitem{CN-JR-HK:14}
C.~Nieto-Granda, I.~John G.~Rogers, and H.~I. Christensen, ``\href{ https://doi.org/10.1177/0278364913515309}{Coordination strategies for multi-robot exploration and mapping},'' {\em International Journal of Robotics Research}, vol.~33, no.~4, pp.~519--533, 2014.

\bibitem{LC-JE-AA:22}
L.~Clark, J.~Edlund, M.~Net, T.~Vaquero, and A.~akbar Agha-mohammadi, ``\href{https://www.roboticsproceedings.org/rss18/p014.pdf}{PropEM-L: Radio Propagation Environment Modeling and Learning for Communication-Aware Multi-Robot Exploration},'' in {\em Robotics: Science and Systems}, (New York City, NY,USA), June 2022.

\bibitem{BB-NC-AI:16}
B.~Bayat, N.~Crasta, H.~Li, and A.~Ijspeert, ``\href{https://ieeexplore.ieee.org/document/7759287/}{Optimal search strategies for pollutant source localization},'' in {\em IEEE/RSJ International Conference on Intelligent Robots and Systems}, 2016.

\bibitem{GW-PD-ET:16}
G.~Williams, P.~Drews, B.~Goldfain, J.~M. Rehg, and E.~A. Theodorou, ``\href{https://ieeexplore.ieee.org/document/7487277}{Aggressive driving with model predictive path integral control},'' in {\em {IEEE} International Conference on Robotics and Automation}, (Stockholm, Sweden), 2016.

\end{thebibliography}
\bibliographystyle{ieeetr}
\end{document}